\newcommand{\Norm}[1]{\left\lVert#1\right\rVert}
\definecolor{skyblue}{RGB}{204,229,255}
\definecolor{darkblue}{rgb}{0, 0, 0.5}
\newcommand{\ttcanaries}{\texttt{TS-Canary}\@\xspace}
\newcommand{\ttrepeat}{\texttt{TS-Repetition}\@\xspace}
\renewcommand{\paragraph}[1]{\textbf{#1.} }
\newcommand{\seqtd}{\texttt{MemSinks}\@\xspace}
\title{Memorization Sinks: Isolating Memorization during LLM Training}
\begin{document}

\maketitle

\begin{abstract}
Large language models are susceptible to memorizing repeated sequences, posing privacy and copyright concerns. A popular mitigation strategy is to remove memorized information from specific neurons post-hoc. However, such approaches have shown limited success so far. In a controlled setting, we show that the memorization of \emph{natural} sequences (those that resemble linguistically plausible text) become \emph{mechanistically entangled} with general language abilities, thereby becoming challenging to remove post-hoc. In this work, we put forward a new paradigm of \seqtd that promotes isolation of memorization by design. We leverage a sequence identifier that activates a unique set of memorization neurons for each sequence across repetitions. By analyzing the dynamics of learning and forgetting, we argue that \seqtd facilitates isolation of memorized content, making it easier to remove without compromising general language capabilities. We implement \seqtd at the billion-parameter and billion-token scale, and observe both effective isolation and strong generalization. To our knowledge, this is the first proof-of-concept on real data demonstrating that simultaneous generalization and isolation is achievable. We open-source our code at \url{http://github.com/grghosal/MemSinks}.

\end{abstract}

\section{Introduction}
\label{sec:intro}

Large language models often memorize sequences that are repeated during pretraining \citep{carlini2023quantifyingmemorizationneurallanguage, nasr2023scalableextractiontrainingdata}, posing concerns in privacy, copyright, and membership inference. There is significant research on the problem of ``unlearning'' or removing memorized information from models post-hoc, including fine-tuning based approaches \cite{maini2022characterizing, barbulescu2024textualsequenceownimproving}. However, presently all such methods present a substantial tradeoff between removing memorization and preserving general model capability. 
We identify a core reason for the tradeoff: standard training induces \emph{mechanistic entanglement}, where the same components support both generalization and memorization \cite{scherlis2025polysemanticitycapacityneuralnetworks}. This occurs when memorization relies on mechanisms also used for general language understanding. In a controlled setting, we show that repeated natural text sequences are memorized with such entanglement, making it challenging to remove them without harming general performance. We further prove that gradient descent has an implicit bias toward such entangled solutions, suggesting that \textbf{mechanistic entanglement is inherent to current training methods.}

\begin{center}
\emph{Can new training approaches better disentangle memorization and general language capabilities?}
\end{center}

We begin with a natural disentanglement attempt (Section~\ref{sec:insuffenforcing}; see also~\cite{cloud2024gradientroutingmaskinggradients}) by restricting gradient updates from repeated sequences to designated ``memorized components'',  while the remaining ``general components'' learn only from non-repeated data. This approach has two major flaws. First, it weakens generalization by depriving general components of all training signal from potentially high-quality repeated sequences. More subtly, generalization further degrades when memorized components are removed at inference: the model's general capabilities co-adapt to the memorization neurons during training. Removing them post-hoc breaks this dependence and harms performance. 

\begin{figure*}
    \centering
    \includegraphics[width=\linewidth]{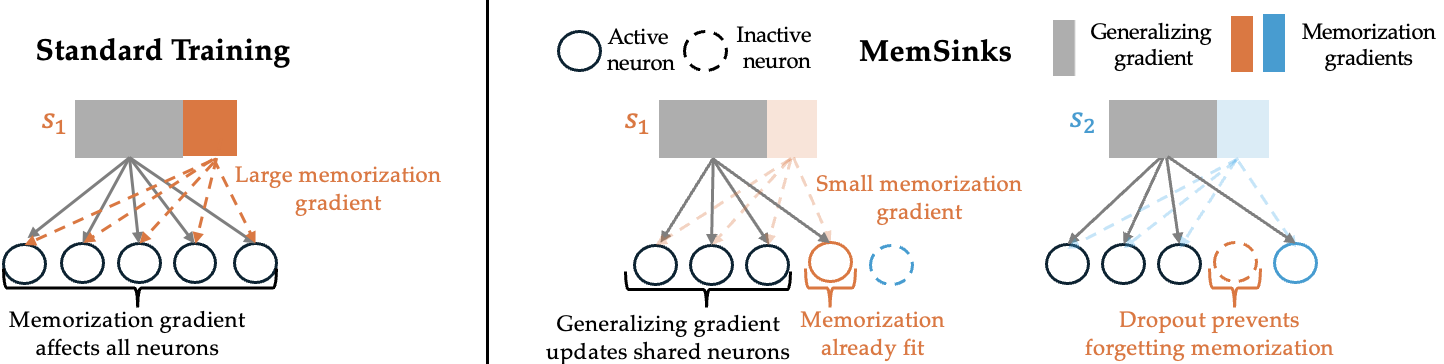}
    \caption{\textbf{Conceptual Intuition of \seqtd}. For illustration, we assume that gradient of each sequence can be decomposed into a generalizing (gray) and memorizing component. Under standard training (shown in left) memorization can be arbitrarily distributed throughout the model. In \seqtd, on the other hand, memorization sink neurons are protected from forgetting and as a result accumulate memorization in a known subset of neurons (illustrated on the right hand side).}
    \label{fig:enter-label}
\end{figure*}

Drawing lessons from the failures above, we introduce Memorization Sinks \emph{} (\seqtd) --- inspired by the previous study of localization in \citet{maini2023neuralnetworkmemorizationlocalized}.
\begin{tcolorbox}[left=1mm,right=1mm,colback=blue!5!white,colframe=white]
\textbf{Memorization Sinks} are neurons that are consistently and selectively activated when training on a sequence (and dropped out on other sequences). As a result, these sinks accumulate sequence-specific memorization and can be dropped out to remove it.
\end{tcolorbox}

Note that this approach addresses the implicit dependence issue identified earlier, and also allows the shared components to learn from repeated sequences. But what prevents the shared components from containing memorized information? Does it suffer from the same mechanistic entanglement issue of standard training?

The key insight lies in the difference in the training dynamics of generalizing and memorization signal, which has been studied in disparate lines of work. Generalizing signal is consistently amplified across training sequences~\cite{chatterjee2020coherent}. However, memorization signals of different training sequences interfere with each other, resulting in a cyclical learning-forgetting dynamic~\citep{toneva2018empirical}: learning when the repeated sequence is seen, but forgetting when training on a different sequence due to interference. In standard training, there is no separation of model components so this learning-forgetting dynamic occurs throughout the model leading to mechanistic entanglement. However, in \seqtd, this cycle is broken by allocating separate components per repeated sequence that are \textbf{protected from interference} with other sequences. As a result, memorized information is not reinforced in the shared components across repetitions of sequences.

How well does \seqtd disentangle memorization and generalization in practice?
We train 360M and 1.7B SmolLM models \cite{smollm} on the SlimPajama dataset \cite{shen2024slimpajamadcunderstandingdatacombinations}, with repeated sequences drawn from TinyStories. Standard training leads to strong memorization where repeated sequences show significantly (i.e. 200x) lower loss than held-out ones. With \seqtd, dropping the memorization components closes over 50\% of this loss gap, mitigating memorization. Furthermore, \seqtd (without memorization components) matches the validation loss of standard training and significantly outperforms a de-duplication baseline, preserving the benefits of repeated data for generalization. This provides a proof-of-concept that \seqtd can disentangle memorization from generalization in realistic settings (Section \ref{sec:largescalememsink}).

We perform various ablations on the impact of design choices for implementing \seqtd on a small-scale TinyStories pre-training task in Sections ~\ref{sec:mainresults} and ~\ref{sec:practicalityseqtd}. To assess the practicality of \seqtd, we investigate two key axes: (1) scalability with respect to model and data size, and (2) robustness to noise in sequence ID assignment. While our proof-of-concept assumes access to perfect sequence IDs, real-world deployments would require tolerance to imperfect or noisy annotations. We find that \seqtd is robust to small levels of noise in the sequence IDs (up to 10\%) and works across a range of model sizes. Importantly, the benefits of \seqtd scale with increasing model size, suggesting promise at larger scales.

In summary, we argue that post-hoc unlearning on standard trained models is fundamentally limited due to inevitable mechanistic entanglement between memorization and generalization components. We propose a new training paradigm, \seqtd, and provide a conceptual explanation for how it enables clean disentanglement. Empirically, we show that \seqtd supports  post-hoc removal of memorized content without compromising performance on real-world data. We also demonstrate its practicality: the approach scales well, with gains preserved, and even amplified, as we scale up models and datasets, and it remains robust under noisy sequence ID metadata. Ultimately, \seqtd offers a concrete path forward for the challenge of removing memorization without degrading model capabilities.
\section{Related Works}

\paragraph{Forgetting Memorized Sequences} 
With the discovery of memorization of sequences in large language models \cite{carlini2023quantifyingmemorizationneurallanguage, nasr2023scalableextractiontrainingdata}, there has been interest in techniques to remove memorization in LLMs post-hoc. One class of such methods involves further training of \emph{all} language model parameters in order to reduce the likelihood of a memorized sequence. For example \citet{thudi2022unrollingsgdunderstandingfactors} presents the simple technique of simply training to increase the loss of memorized examples. \citet{liu2022continuallearningprivateunlearning} further regularizes this by concurrently minimizing the loss on a \emph{retain set} of validation set examples. Other works have examined using preference-based training to incentivize the replacement of a memorized sequence with a safe alternative \cite{zhang2024negativepreferenceoptimizationcatastrophic, maini2023neuralnetworkmemorizationlocalized}. However, currently all such unlearning methods are prone to degrading general model capabilities, beyond the desired unlearning target \cite{maini2023neuralnetworkmemorizationlocalized}. Moreover, they require the computational expense of retraining all weights and in many cases access to additional data in the form of a retain set.

\paragraph{Localization of Memorization} A promising class of methods for removing memorization seek to \emph{localize} specific model components, such as neurons, that are responsible for storing memorized sequences \cite{maini2023neuralnetworkmemorizationlocalized}. Once such neurons are identified, they can simply be \emph{dropped out} to ensure unlearning. These methods have the advantage of avoiding costly full model training and seek to avoid model degradation by minimizing the number of parameters changed. Localization techniques have achieved success in removing personally identifying information (PII) \cite{chen2024learnableprivacyneuronslocalization}, as well as when there exist distinctive memorization triggers \cite{stoehr2024localizingparagraphmemorizationlanguage}. However, all such methods present a tradeoff between removing memorization and preserving model performance \cite{chang2024localizationmethodsactuallylocalize}.

\paragraph{Mixture-of-Expert Models} Sparse mixture-of-experts models (S-MOE) have been studied in prior works as a means to expand model capacity without increasing compute costs \cite{shazeer2017outrageouslylargeneuralnetworks}. Like \seqtd, MOE models selectively activate fully-connected neurons depending on the input token. However, the selection of neurons in SMOE is typically learned subject to load-balancing constraints (as opposed to explicitly enforced as in \seqtd). Consequently, it is unclear what degree of localization is achieved by SMOE. \citet{zoph2022stmoedesigningstabletransferable} find that experts generally specialize based off of low-level syntactic structures rather than semantic features. \citet{dai2024deepseekmoeultimateexpertspecialization} find that there is often redundancy across experts in MOEs, challenging the idea that they reliably localize information within experts. Other works \cite{gururangan2021demixlayersdisentanglingdomains, park2025monetmixturemonosemanticexperts} provide some evidence of localization in MOE models: however they measure localization relative to coarse attributes such as language and general topics. In this work we study the far more fine-grained problem of localizing memorization of specific documents or sequences.

\section{The Pitfalls of Post-Hoc Localization}
\label{sec:pitfallsofposthoc}

\newcommand{\wproj}[0]{\mathbf{W_{\text{proj}}}}
\newcommand{\wfc}[0]{\mathbf{W_{\text{fc}}}}

A substantial body of prior work seeks to remove memorized content from LLMs through post-hoc updates. These approaches attempt to precisely target the mechanisms responsible for memorization, while preserving general capabilities. Their success, however, hinges on the assumption that memorization and generalization are supported by distinct, non-overlapping mechanisms. In this section, we conduct controlled experiments to examine when such mechanistic separation holds. We compare two settings: (1) highly atypical canary sequences, commonly used in prior memorization studies, and (2) sequences resembling natural text from the pretraining corpus. We find that memorization of these more natural sequences is significantly harder to remove post-hoc, suggesting there may be \textbf{mechanistic entanglement} between memorization and general capabilities. In an analytical setting, we show that gradient descent actively prefers entangled solutions.

 \subsection{Experimental Setting}
We train models on two controlled settings designed to induce different types of memorization: repeated natural text sequences and highly atypical canaries. We then test whether the memorized sequences can be removed from the model without inducing model degradation.

\paragraph{Datasets} We conduct our experiments in a controlled setting using a subset of the TinyStories dataset \citep{eldan2023tinystoriessmalllanguagemodels}. In our first setting, we randomly sample 100 stories from the TinyStories training set and repeat them 128 times (\ttrepeat). Resultantly, memorized sequences are composed of natural text (coming from the same distribution as all other sequences). As a comparison, we study a second \textit{canary-based} setting where memorized sequences are atypical (\texttt{TS-Canary}). We concatenate random sequences of tokens (Canaries) to 100 stories and repeat them 128 times in training. While \texttt{TS-Canary} more closely resembles memorization of label noise or atypical examples in supervised settings, it may be less representative of memorized documents in LLMs. In both cases, we additionally include 20,000 un-repeated TinyStories sequences (in total we train for $\sim$16M tokens).

\paragraph{Evaluation Metrics} We measure \textbf{sequence forgetting} as the difference in loss on repeated sequences before and after localization and dropout (higher is better). We measure the \textbf{model degradation} as the difference between the validation loss before and after removal (higher is better). This reflects that we hope to avoid increases in validation loss when removing memorization.


\subsection{Empirical Observations}
We show the results of our analysis in Figure \ref{fig:localization}. We observe that both post-hoc methods achieve limited success and struggle particularly to remove typical memorized sequences from \ttrepeat.
\begin{figure*}[t]
\vskip 0.2in
\centering
\begin{subfigure}[t]{0.31\textwidth}
    \centering
    \includegraphics[width = 1.0\textwidth]{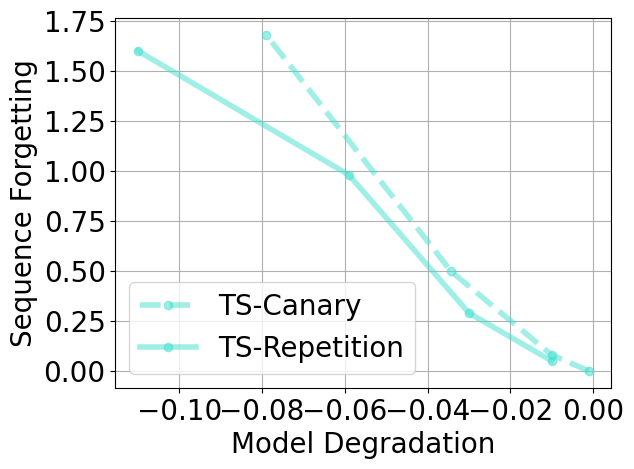}
    \caption{Pruning}
    \label{subfig:tradeoffhc}
\end{subfigure}
\hfill
\begin{subfigure}[t]
{0.30\textwidth}
\centering
    \includegraphics[width = 1.0\textwidth]{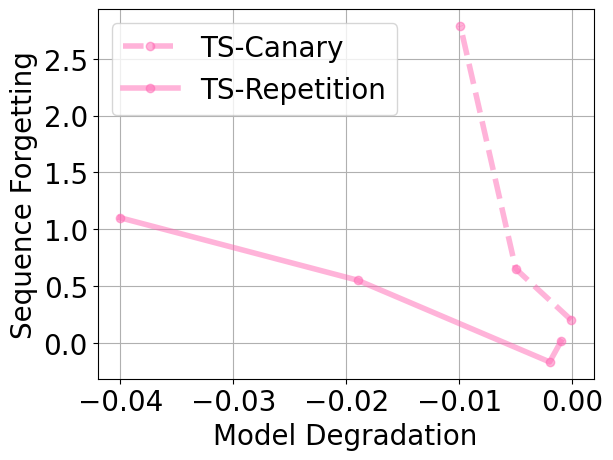}
    \caption{Integrated Gradients}
    \label{subfig:tradeoffig}
\end{subfigure}
\hfill
\begin{subfigure}[t]
{0.29\textwidth}
\centering 
    \includegraphics[width =1.0\textwidth]{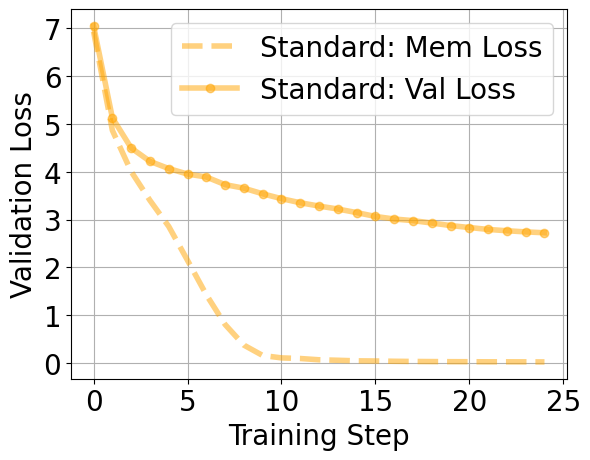}
    \caption{Learning Curve}
    \label{subfig:learningandmemorizationloss}
\end{subfigure}

\caption{\textbf{Study of Localization} (a) We plot the unlearning-model degradation tradeoff of pruning by varying the number of dropped out neurons and demonstrate the method struggles to unlearn sequences of both kinds. (b) We plot the performance of integrated gradients and demonstrate that although it mitigates model degradation in both cases, it struggles with removing typical sequences. (c) Loss curve when training on \ttrepeat. We observe that memorization decreases alongside the validation loss, indicating that the model gains capability even as it memorizes sequences.}
\label{fig:localization}
\vskip -0.2in
\end{figure*}

\paragraph{Localization Methods Achieve Only Partial Success}
In Figure \ref{subfig:tradeoffhc} we show the trade-off in sequence forgetting and model degradation of pruning. We observe in both settings that dropping out the identified neurons leads to an increase in the memorized sequence's loss, suggesting some success in localization. There are similar trends in Figure \ref{subfig:tradeoffig} for integrated gradients, although we observe it generally produces less model degradation than pruning. Additionally, we see that integrated gradients is less effective in removing memorization in \ttrepeat, while being highly effective in \ttcanaries.

\paragraph{Natural Sequence are Particularly Challenging to Remove} Across both methods, we find that applying post hoc methods to \ttrepeat results in greater model degradation than \ttcanaries. This difference is particularly pronounced for integrated gradients. Recall that the memorized sequences in \ttrepeat are natural -- similar to the non-repeated training data and the validation set. Our results suggest the memorization of typical sequences may not be cleanly isolated from the model mechanisms responsible for general capabilities.

\paragraph{No Clear Separation Between Natural Sequence Memorization and Generalization} In Figure \ref{subfig:learningandmemorizationloss}, we plot the validation and repeated sequence losses of a model trained on \ttrepeat. We see that the loss on repeated sequences and the validation set descend simultaneously. The simultaneous learning of memorization and generalization seen here underscores the tight interplay between natural sequence memorization and general capabilities, contrasting from settings such as label noise where memorization occupies a distinct phase of training \cite{pmlr-v108-li20j}.

Ultimately, our empirical results highlight that removing memorized natural text can be especially challenging for simple post-hoc methods. Next, we provide an explanation of this phenomena in a theoretical setup.

\subsection{Theoretical Analysis of Mechanistic Entanglement}
Our empirical results suggest that not all memorized sequences are equally easy to remove. While atypical canaries can be removed with minimal model degradation, removing natural memorized sequences tends to induce model degradation. Intuitively, this suggests that the memorization of natural text sequences is closely \textit{entangled} with mechanisms responsible for general language modeling (thus being difficult to fully extricate post-hoc). In this section, we theoretically illustrate that such mechanistically entangled solutions can be preferred by the training process.

We consider a setting where the model features are separated into a generalizing subspace $S_\text{sem}$ and a memorization subspace $S_{\text{mem}}$. We assume that memorization can be implemented either with an \textit{entangled solution}, $\mathbf{W}_{\textrm{ent}}$ (reusing the features $S_\text{mem}$) or a disentangled solution $\mathbf{W}_{\textrm{dis}}$, using the orthogonal space $S_{\textrm{mem}}$. Our complete setting is described in Appendix \ref{app:disentanglement}. Under some assumptions, we prove that gradient descent is biased towards reusing the features $S_{\text{sem}}$: 

\begin{theorem}[\textit{Informal}: Natural Sequence Memorization is Entangled]
\label{thm:mementangle} Consider training $f(\mathbf{x}) = \wproj \wfc \mathbf{x}$ on $\mathcal{D}_{\textrm{pre}}$ as in the setting of Appendix \ref{app:disentanglement}. Then gradient flow does not converge to $\mathbf{W_{\textrm{dis}}}$.
\end{theorem}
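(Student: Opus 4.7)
The plan is to prove this as an implicit-bias statement for the two-layer linear model $f(\mathbf{x}) = \wproj \wfc \mathbf{x}$. First I would verify, from the setup in Appendix \ref{app:disentanglement}, that both $\mathbf{W}_{\textrm{ent}}$ and $\mathbf{W}_{\textrm{dis}}$ are global minima of the training loss on $\mathcal{D}_{\textrm{pre}}$: each simultaneously interpolates the generalizing signal along $S_{\text{sem}}$ and the memorization targets. This reduces the question from ``which minimum does gradient flow reach?'' to a comparison of an implicit regularizer across these two candidate minima, so the theorem follows if the regularizer value at $\mathbf{W}_{\textrm{dis}}$ is strictly larger than at some other interpolating solution (e.g.\ $\mathbf{W}_{\textrm{ent}}$).

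Second, I would compute and compare the norms of the product matrices at the two solutions. The key linear-algebra step is to show that $\|\wproj \wfc\|_F^2$ (equivalently, the sum of squared singular values, or the nuclear norm) evaluated at $\mathbf{W}_{\textrm{ent}}$ is strictly smaller than at $\mathbf{W}_{\textrm{dis}}$. Intuitively, the entangled solution overlays memorization updates onto weight-space directions that are already loaded to implement generalization in $S_{\text{sem}}$, so memorization is ``free'' in terms of additional singular directions. The disentangled solution dedicates a new set of orthogonal directions in $S_{\textrm{mem}}$, strictly inflating the rank and the Frobenius norm of the product by an amount determined by the number and magnitude of memorized targets. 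Orthogonality of $S_{\text{sem}}$ and $S_{\textrm{mem}}$ makes this a direct Pythagorean calculation.

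Third, I would invoke the standard implicit-bias characterization for two-layer linear networks trained with gradient flow from a balanced (or vanishingly small) initialization: the product $\wproj\wfc$ converges, along interpolating trajectories, to a minimizer of the Frobenius/nuclear norm among feasible solutions (in the spirit of Gunasekar et al.\ and Arora et al.). Combined with the norm strict inequality above, this yields that gradient flow cannot converge to $\mathbf{W}_{\textrm{dis}}$, since $\mathbf{W}_{\textrm{ent}}$ provides a strictly lower-norm interpolating competitor.

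The main obstacle will be matching the precise hypotheses of the implicit-bias theorems to the specific setup in the appendix --- in particular, the need for a balanced or small initialization and for the gradient flow path to reach a KKT point of the norm-penalized program. If the appendix instead uses a different parameterization or initialization, I would fall back on the conservation law $\wproj^\top \wproj - \wfc\wfc^\top = \textrm{const}$ along gradient flow to reduce to an effective single-matrix factorization and re-derive the norm comparison, or alternatively decompose the gradient flow ODE into the $S_{\text{sem}}$ and $S_{\textrm{mem}}$ components and argue directly that no component in $S_{\textrm{mem}}$ is ever excited while the entangled direction already suffices to drive the loss to zero. The latter, dynamical argument is a cleaner route to the negative statement (``does not converge to $\mathbf{W}_{\textrm{dis}}$'') without requiring the full strength of the implicit-bias theorems.
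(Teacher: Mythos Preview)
Your proposal is essentially the paper's proof. The paper likewise (i) exhibits $\mathbf{W}_{\textrm{ent}}$ and $\mathbf{W}_{\textrm{dis}}$ as interpolating solutions, (ii) computes $\|\mathbf{W}_{\textrm{dis}}\|_F^2 = \|\mathbf{W}^*\|_F^2 + \|\Delta_{\textrm{mem}}\|_2^2$ via the orthogonality of $S_{\textrm{mem}}$ and upper-bounds $\|\mathbf{W}_{\textrm{ent}}\|_F^2$ strictly below this using the assumed small singular values of $\mathbf{W}^*$ and $\mathrm{rank}(\mathbf{W}^*)>2$, and (iii) invokes an implicit-bias theorem (specifically \citet{varre2023on}, which requires an ``orthogonal feature initialization'' rather than the balanced/vanishing initialization you cite from Gunasekar/Arora) to conclude that gradient flow converges to the minimum-Frobenius-norm interpolant, hence not to $\mathbf{W}_{\textrm{dis}}$. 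Your fallback dynamical argument is not needed and is not pursued in the paper.
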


We defer the full proof to Appendix \ref{app:disentanglement}, but discuss the intuition of our result here. Our result follows from past analyses of the minimum-norm bias of gradient flow. As a result of this bias, memorization perturbs the generalizing subspace and any post-hoc unlearning methods must also perturb $S_{\textrm{sem}}$. Had memorization been implemented with $S_{\textrm{noise}}$, unlearning could be accomplished through updates \textit{orthogonal} to the general capabilities (reducing the risk of degrading general capabilities).

Ultimately, our findings in this section challenge the feasibility of relying on post-hoc approaches to remove memorized information. We show empirically and theoretically that memorization can be implemented closely entangled with general capabilities. In the remainder of the paper, we study ways to induce models to  isolate the memorization of natural text sequences during training.

\begin{figure}
    \centering
    \includegraphics[width=0.32\linewidth]{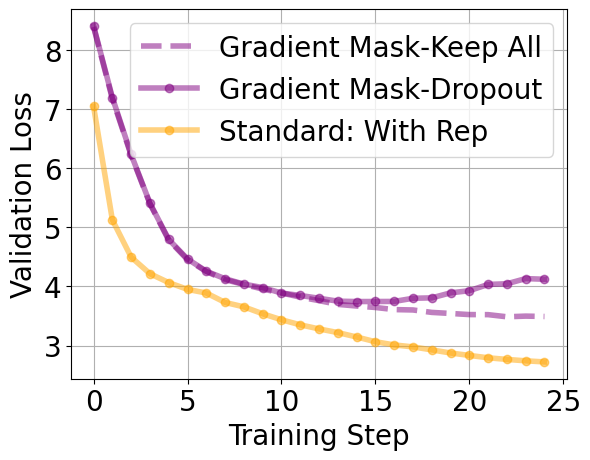}
    \caption{\textbf{Impact of Gradient-Masked Training on Validation Performance}. We compare the validation loss of gradient-masked training with (Gradient Mask-Dropout) and without (Gradient Mask-Keepall) memorization neurons removed to a standard training run (Standard: With Rep). We observe that (a) gradient-masked training learns slowly relative to standard training, achieving a significantly worse validation loss and (b) dropping out memorization neurons degrades validation performance as training progresses.}
    \label{fig:gradmask}
\end{figure}
\section{The Mirage of Forced Localization}
\label{sec:insuffenforcing}
In Section~\ref{sec:pitfallsofposthoc}, we found that standard training techniques can result in memorization being difficult to decouple from the model's general capabilities. This suggests it may be necessary to specifically pre-train models for the ability to eliminate memorized information downstream. 

Ideally, we would like to train models where sequence-specific memorization is handled by a distinct set of components (the \emph{memorization component}), while general capabilities are encoded in a separate \emph{generalization component}. This would enable removal of memorized information simply by modifying the memorization component, without risking any damage to the generalizing mechanisms. The most straightforward way to enforce this structure is by constraining the training gradients: updates from repeated, likely-to-be-memorized sequences can be restricted to the memorization component, while updates from all other sequences update the generalization component.

While the concept of forced localization is intuitively appealing, we empirically find two major drawbacks. First, it leads to significantly worse generalization compared to standard training. This suggests that shared components—those updated by gradients from all data points—are crucial for maximally learning general capabilities. Second, we find that removing memorization components still degrades model performance, indicating a failure to make memorization fully independent of the general capabilities.

\subsection{Forcing Localization By Masking Gradients}
\label{subsec:gradmask}

We adopt a similar methodology to \citet{cloud2024gradientroutingmaskinggradients}. In each layer we partition the intermediate neurons in the MLP into generalization and memorization neurons. During training, the gradients in the MLP layer from repeated sequences are masked to only modify weights corresponding to memorization neurons and conversely, non-repeated sequences have their gradients routed to the generalization neurons. We provide complete experimental details in Appendix \ref{app:impgradmask}. 


\subsection{Empirical Findings}

\paragraph{Forced Localization Impairs Generalization} We observe that the performance of gradient masking is inferior to a standard model, even before memorization neurons are removed (Figure~\ref{fig:gradmask}). This observation renders gradient masking impractical, as it significantly worsens the model's general capabilities. Our findings here highlight that some ``shared" neurons -- those that are updated by all sequences -- are essential to maximally aggregate generalizing features. However, this suggests a potential tradeoff between generalization and disentangling memorization: it is unclear how to ensure that shared neurons do not implement memorization as well.

\paragraph{Localization Alone is Not Sufficient for Removal} In Figure \ref{fig:gradmask}, we further compare whether the localization induced by Gradient Masking translates to degradation-free removal of memorization. Intuitively, if generalization and memorization are truly separated, it should be possible to remove memorization neurons without harming validation performance. In Figure \ref{fig:gradmask}, we examine the effect of removing memorization neurons (seen in the gap between Gradient Mask-Keep All and Gradient Mask-Dropout). We observe that the validation loss becomes sensitive to the removal of memorization neurons as training progresses. This reveals, somewhat surprisingly, that localization alone does not guarantee degradation-free  unlearning. In  Section \ref{sec:forcedlocalizationanalysis}, we study this phenomena in an analytical setup.


\subsection{Co-Adaptation of Model Components}
\label{sec:forcedlocalizationanalysis}
Previously, we made an unexpected finding: segregating memorization to specific neurons does not automatically enable its straightforward removal. In this section, we theoretically examine why this sensitivity arises by studying the training dynamics of a simplified gradient masking setup (described in Appendix \ref{app:analysisgradmask}). We show a phenomena of \emph{neuron co-adaptation} in which separately trained neurons can nevertheless become sensitive to each other's removal.


\begin{theorem}[\textit{Informal}: Co-Adaptation between Model Components]
\label{thm:informalcoadaptation}
 Consider training a two layer linear neural net f with the forced-localization scheme in Appendix \ref{app:analysisgradmask}. Let $\hat{f}$ denote a two layer linear neural net trained in a standard manner on $\mathcal{D}_{\textrm{pre}} \setminus (s^{\textrm{mem}}, y^{\textrm{mem}})$ and $f_{d}$ denote f with memorization neuron dropped out, Then we have for a constant $c>0$
\begin{equation*}
 \Norm{f(\mathbf{x})_{d} -\tilde{f}(\mathbf{x})}_{2} \geq cN
\end{equation*}
\end{theorem}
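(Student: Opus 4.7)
The plan is to reduce the claim to a direct comparison of the gradient-flow fixed points of the two training procedures and then lower-bound their outputs using the structural asymmetry introduced by gradient masking. Concretely, for the two-layer linear map $f(\mathbf{x}) = \wproj \wfc \mathbf{x}$, I would block-decompose the neurons into a memorization block and a generalization block, writing $\wfc = \begin{pmatrix}\wfc^{\mathrm{mem}} \\ \wfc^{\mathrm{gen}}\end{pmatrix}$ and $\wproj = \begin{pmatrix}\wproj^{\mathrm{mem}} & \wproj^{\mathrm{gen}}\end{pmatrix}$, so
\[
f(\mathbf{x}) = \wproj^{\mathrm{mem}}\wfc^{\mathrm{mem}}\mathbf{x} + \wproj^{\mathrm{gen}}\wfc^{\mathrm{gen}}\mathbf{x}.
\]
The key structural point is that although forced-localization restricts \emph{which} block is updated per sequence, the \emph{loss} on every sequence is still computed through the full network. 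Therefore, the residual driving updates to $\wfc^{\mathrm{gen}}, \wproj^{\mathrm{gen}}$ from non-repeated sequences is $y - \wproj^{\mathrm{mem}}\wfc^{\mathrm{mem}}\mathbf{x} - \wproj^{\mathrm{gen}}\wfc^{\mathrm{gen}}\mathbf{x}$, so the generalization block is driven to compensate for the contribution of the memorization block on non-repeated inputs.

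Second, I would characterize the converged product matrices. Setting the gradient-flow stationarity conditions to zero on the non-repeated data, the end-to-end map satisfies the normal equations of unconstrained least squares on $\mathcal{D}_{\mathrm{pre}}\setminus(s^{\mathrm{mem}},y^{\mathrm{mem}})$, which is exactly the equation solved by $\hat f$. That is, $\wproj\wfc = \hat{\wproj}\hat{\wfc}$ as linear maps on the non-repeated input distribution, up to kernel directions. Consequently, the generalization block must absorb the offset: $\wproj^{\mathrm{gen}}\wfc^{\mathrm{gen}} = \hat{\wproj}\hat{\wfc} - \wproj^{\mathrm{mem}}\wfc^{\mathrm{mem}}$ on the relevant subspace. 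Dropping the memorization block therefore yields $f_d(\mathbf{x}) - \tilde f(\mathbf{x}) = -\wproj^{\mathrm{mem}}\wfc^{\mathrm{mem}}\mathbf{x}$, reducing the theorem to a lower bound on the operator norm of the memorization product evaluated at $\mathbf{x}$.

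Third, I would control $\|\wproj^{\mathrm{mem}}\wfc^{\mathrm{mem}}\mathbf{x}\|_2$ from below using the dynamics of the memorization block. Because the memorization block is trained under gradient flow to fit the $N$ repeated pairs $(s^{\mathrm{mem}}_i, y^{\mathrm{mem}}_i)$ with no interference from non-repeated data, its product matrix is forced to interpolate these targets (assuming sufficient width in the memorization block, as in Appendix~\ref{app:analysisgradmask}). Using the assumed structure of the memorized pairs (e.g., near-orthogonality or the independence properties imposed in the appendix), the Frobenius norm of $\wproj^{\mathrm{mem}}\wfc^{\mathrm{mem}}$ scales linearly in $N$, and a generic test input $\mathbf{x}$ projects nontrivially onto its range, delivering the $cN$ lower bound.

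The main obstacle I anticipate is that two-layer linear networks have a factorization ambiguity under gradient flow: the individual factors $\wproj, \wfc$ are only pinned down by the initialization and implicit bias, even though their product is determined by the data. I would circumvent this either by assuming balanced initialization (so that the implicit-bias characterization of gradient flow gives a minimum-Frobenius-norm product) or by reasoning only about the end-to-end product as I did above, which is invariant to the ambiguity. A secondary, more mechanical, concern is ensuring that the non-repeated design matrix is well-conditioned so that the stationarity conditions pin down the product on the directions relevant for $\mathbf{x}$; this should follow from the genericity assumptions on $\mathcal{D}_{\mathrm{pre}}$ baked into the appendix setup.
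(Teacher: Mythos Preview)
Your proposal misreads the setup in Appendix~\ref{app:analysisgradmask} in ways that invalidate the argument. First, only $\wproj$ is trained; $\wfc$ is \emph{frozen} throughout, so block-decomposing $\wfc$ and reasoning about factorization ambiguity or balanced initialization is irrelevant. Second, and more seriously, you misinterpret $N$: it is \emph{not} the number of memorized pairs. There is a single memorized example $(s^{\mathrm{mem}},y^{\mathrm{mem}})$ seen once at step $T$, and $N$ counts the subsequent SGD steps on non-repeated examples. Your step 3, which obtains the linear-in-$N$ bound from the Frobenius norm of a memorization block that interpolates $N$ targets, therefore proves the wrong thing entirely.

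Because $N$ counts training iterations, a fixed-point argument cannot deliver the bound: at convergence the right-hand side would be a constant independent of training length, not $cN$. The paper's proof is a finite-time SGD calculation. Since $\mathbf{W}_{\mathrm{gen}}$ and $\tilde{\mathbf{W}}_{\mathrm{gen}}$ agree until step $T$, and the single update to $\mathbf{W}_{\mathrm{mem}}$ makes it a rank-one matrix $\alpha\, y_{\mathrm{mem}} s_{\mathrm{mem}}^\top$, one unrolls the subsequent $N-1$ updates and obtains exactly
\[
\mathbf{W}_{\mathrm{gen}}-\tilde{\mathbf{W}}_{\mathrm{gen}}=-\gamma\sum_{k=1}^{N-1}\mathbf{W}_{\mathrm{mem}}\,\mathbf{s}_k\mathbf{s}_k^\top\prod_{j=k+1}^{N-1}(\mathbf{I}-\gamma\,\mathbf{s}_j\mathbf{s}_j^\top).
\]
Each summand is then bounded below by $(1-\gamma)^{N-1}c^2\|\mathbf{x}\|_2$ using the pairwise inner-product assumption $\min_{i,j}\mathbf{s}_i^\top\mathbf{s}_j\ge c>0$, and the $N$ comes from the \emph{sum}, i.e., from the cumulative compensation of the generalization block over training steps. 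That accumulation-over-iterates mechanism is precisely the co-adaptation phenomenon the theorem is meant to exhibit, and it is absent from your convergence-based outline.
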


Intuitively, the proof of Theorem \ref{thm:informalcoadaptation} shows that although the memorized example is isolated to  a memorization neuron, dropping out this neuron is insufficient to recover a model that was trained without seeing $(s^{\textrm{mem}}, y^{\textrm{mem}})$. Our proof relies on the fact that the forward pass activation of the memorization neuron affects the gradient update to the generalization neurons. Our analysis also predicts that this co-adaptation increases with additional training, mirroring the empirical results in Figure \ref{fig:gradmask}.

Taken together, our findings reveal that the intuitive idea of forcing localization of sequences falls short in crucial ways. Surprisingly, it is both too rigid—disrupting the model’s ability to learn general features—and too permissive, allowing entanglement between memorization and general capabilities to persist. Can we overcome these opposing failure modes? In the next section, we show that a more subtle and targeted intervention in model training dynamics can unlock finer control over memorization and generalization.

\section{Channeling Training Dynamics with \seqtd}
\label{sec:seqtied}

In Section \ref{sec:insuffenforcing}, we found that rigidly  partitioning gradients to achieve disentanglement is insufficient. Our result suggest the need to enforce a more precise separation between memorization and general capability. However, it is challenging to exactly specify what consitutes memorization versus generalization during the training process.

To address this, we  leverage the difference in training dynamics of memorization and generalization. During training, generalizing signals are reinforced across distinct sequences and are broadly amplified \cite{chatterjee2020coherent}. Sequence-specific memorization, on the other hand, experiences interference from other examples and is gradually forgotten. As a result, repeated sequences are cyclically learned and forgotten throughout the training process, thereby becoming defused throughout the model.

We propose to allow memorization to build up in pre-specified \emph{memorization sink} neurons by shielding them from interfering updates. Intuitively, by setting aside a known and stable location for storing memorization, we reduce the need for it to be reinforced across all model parameters. After training, removal of memorization can be achieved by simply deleting the memorization sinks.

Our design requires two properties of the memorization sink neurons: (a) they must experience less interference from unrelated sequences and (b) they must avoid \emph{co-adaptation} with the rest of the model (as discussed in Section \ref{sec:insuffenforcing}). We propose that both objectives can be achieved through \textit{sequence-dependent dropout}. Specifically, a different subset of sink neurons is activated for each input sequence. This selective activation reduces the frequency of interfering gradient updates to sink neurons, shielding them from forgetting. At the same time, their infrequent activation also regularizes the rest of the model against co-adaptation.

\paragraph{Implementation} 
We implement \seqtd in the hidden layer of transformer MLPs, as prior works have found MLPs play a crucial role in storing memorization \cite{nanda2023factfinding, geva-etal-2021-transformer}. We set a fraction $g$ of the hidden layer neurons in each layer as generalization neurons (which are activated across all sequences), while the remaining $1-g$ fraction serve as memorization sink neurons. We assign each sequence in the pretraining dataset an ID and mask the memorization sink neurons deterministically as a function of the sequence ID. During all evaluations, we remove memorization sink neurons. Further implementation details are provided in Appendix \ref{app:seqtdimp}.


\paragraph{Experimental Details} We train a GPT Medium model (same as all previous experiments), where 70\% of MLP neurons are shared and the remaining 30\% are allocated to the pool of memorization neurons. We emphasize that there are \emph{far less} memorization neurons than total sequences. Thus, we do not assume each sequence can be allocated its own memorization neurons. We set the memorization neuron dropout ratio $p$ to $0.3$, but explore other choices in Section \ref{sec:practicalityseqtd}. We train on the \ttrepeat dataset from Section \ref{sec:pitfallsofposthoc}.
\begin{figure*}[t!]
\vskip 0.2in
\centering
\begin{subfigure}[t]
{0.3\textwidth}
    \centering 
    
    \includegraphics[width =1.0\textwidth]{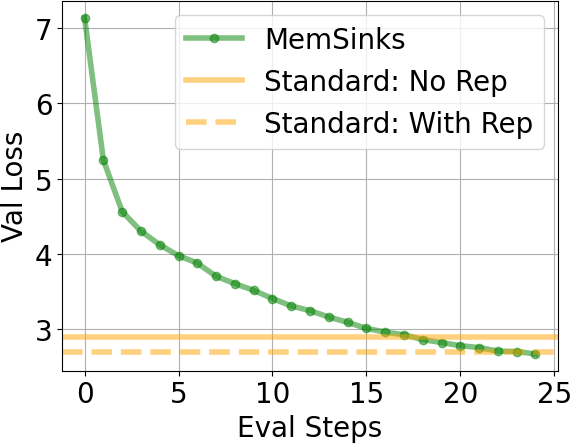}
    \caption{Validation Loss}
    \label{subfig:seqtiedvalloss}
\end{subfigure}
\hfill
\begin{subfigure}[t]
{0.3\textwidth}
    \centering
    \includegraphics[width = 1.0\textwidth]{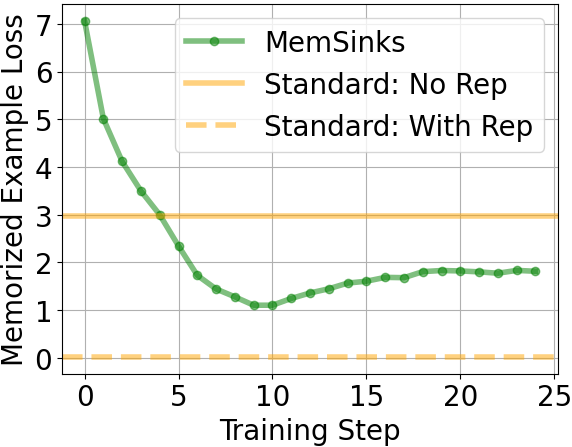}
    \caption{Loss on Memorized Examples}
    \label{subfig:seqtiedmemloss}
\end{subfigure}
\hfill
\begin{subfigure}[t]
{0.32\textwidth}
    \centering
    \includegraphics[width = 1.0\textwidth]{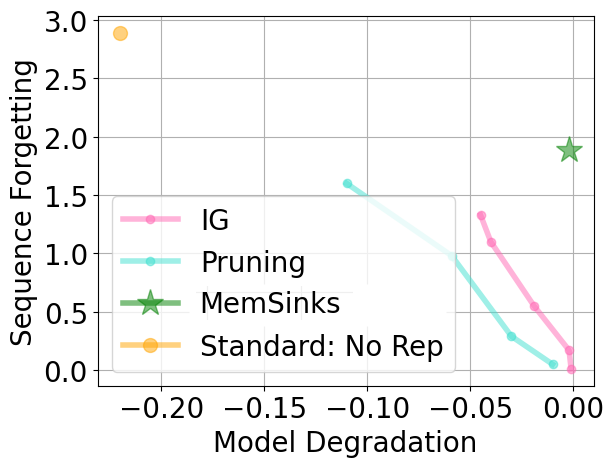}
    \caption{Unlearning-Model Degradation Tradeoff}
    \label{subfig:seqtiedtradeoff}
\end{subfigure}
\caption{\textbf{Performance of \seqtd} (a) We find that \seqtd achieves a comparable validation loss to a normally trained model on \ttrepeat, outperforming a model trained without repeated sequences. (b) We show the loss of \seqtd on the repeated sequences, showing that it memorizes significantly less than a normally trained model. (c) We compare the sequence forgetting-model degradation tradeoff of \seqtd, relative to the post-hoc methods tested in Section \ref{sec:pitfallsofposthoc}, finding \seqtd outperforms both. We compute the model degradation for \seqtd and Standard: No Rep as the difference in the validation loss relative to a standard trained model on \ttrepeat.}
\label{fig:extieddropout}
\vskip -0.2in
\end{figure*}

\subsection{Validation of \seqtd in TinyStories}
\label{sec:mainresults}
\paragraph{\seqtd Enable Learning Across Sequences} In Figure \ref{subfig:seqtiedvalloss}, we compare the validation loss of \seqtd with standard training with and without repeated documents. Firstly, note that standard training with repeated sequences outperforms filtering them out. This indicates that the model does learn general capabilities from observing documents repeated multiple times in our setting. Next, we compare the standard trained models with \seqtd. We observe that (evaluating without the memorization neurons), \seqtd achieves comparable validation loss to standard training. 

\paragraph{Dropping Out \seqtd Removes Memorization}
In Figure \ref{subfig:seqtiedmemloss}, we show the loss on the repeated TinyStories documents. A standard trained model memorizes these sequences during training, achieving close to 0 loss on them. We observe that dropping out the memorization neurons significantly increases the loss on these sequences, increasing the loss to roughly 66\% of a standard trained model that does not memorize. Interestingly, we observe that in the latter part of training, the loss of sequence-tied dropout on memorized sequences begins to increase. This suggests that while shared neurons may initially implement some memorization, further training reverses this.

\begin{figure*}[t!]
\vskip 0.2in
\centering
\begin{subfigure}[t]
{0.3\textwidth}
\centering 

    \includegraphics[width =1.0\textwidth]{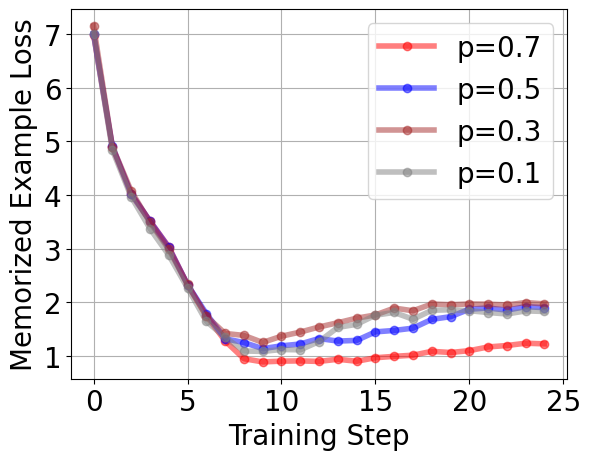}
    \caption{Impact of Activation Ratio}
    \label{subfig:dropoutratio}
\end{subfigure}
\hfill
\begin{subfigure}[t]{0.31\textwidth}
\centering
    \includegraphics[width = 1.0\textwidth]{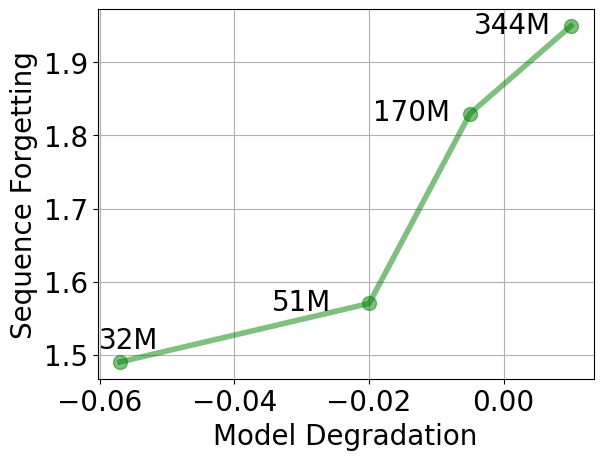}
    \caption{Impact of Model Size}
    \label{subfig:wdablation}
\end{subfigure}
\hfill
\begin{subfigure}[t]{0.3\textwidth}
    \centering
    \includegraphics[width = 1.0\textwidth]{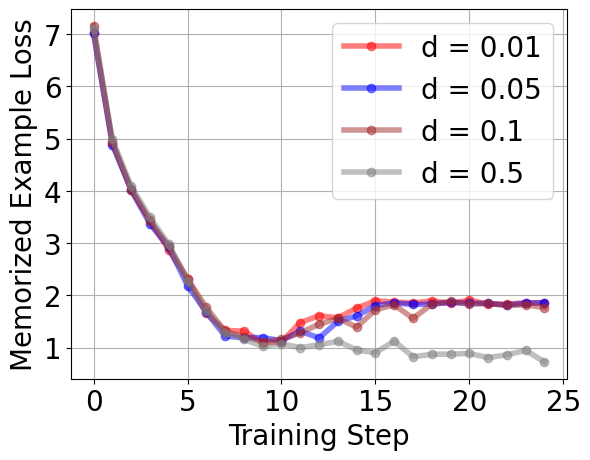}
    \caption{Robustness to Sequence ID Noise}
    \label{subfig:seqidnoise}
\end{subfigure}
\hfill
\caption{\textbf{Practicality of \seqtd} (a) We study the impact of the fraction of memorization neurons activated ($p$) on any given sequence. We find that the model is generally robust to this choice, but activating too many can interfere in the isolation of memorization. (b) We plot the tradeoff between removing memorization and model degradation across model sizes, finding that larger models achieve a better tradeoff. (c) We study the impact sequence ID noise $d$, where a fraction of repeated documents have an inconsistent ID. We find that \seqtd withstands small amounts of noise (up to 10\%).}
\end{figure*}
\begin{figure*}[t!]
\vskip 0.2in
\centering
\begin{subfigure}[t]{0.30\textwidth}
\centering 
    \includegraphics[width =1.0\textwidth]{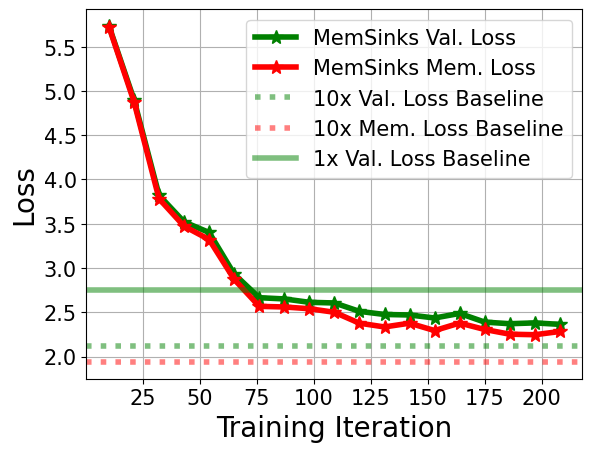}
    \caption{360M Parameters}
    \label{subfig:340m}
\end{subfigure}
\begin{subfigure}[t]{0.30\textwidth}
    \centering
    \includegraphics[width = 1.0\textwidth]{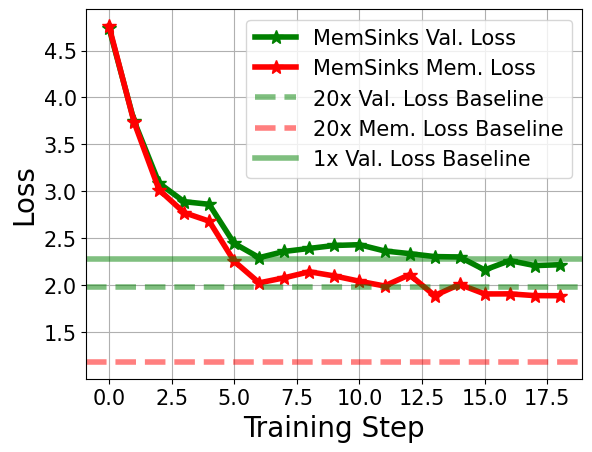}
        \caption{1.7B Parameters}
    \label{subfig:11b}
\end{subfigure}

\caption{\textbf{Larger Scale Experiments on \seqtd} We plot the validation and memorization losses of (a) 360M and (b) 1.7B SmolLM-style models on a mixture of SlimPajama and TinyStories data. We compare with standard training with and without repeated TinyStories data. Our results demonstrate that across both settings, \seqtd is able to mitigate memorization (relative to standard training) without harming validation loss (outperforming or comparable to deduplicated training). }
\end{figure*}

\subsection{Larger Scale Experiments with \seqtd}
\label{sec:largescalememsink}
In the previous section we studied \seqtd in a small scale setting. Now, we provide empirical validation of \seqtd on a larger scale, involving open pretraining datasets. We concentrate on settings where repeating data is beneficial for generalization (i.e. by upsampling a rare domain), but we also wish to avoid exactly memorizing the specific repeated documents.

\paragraph{Setting} We focus on pretraining SmolLM 360 and 1.7 billion models on 1 billion and 2 billion tokens, respectively. The majority of the pretraining dataset consists of a subset of the SlimPajama dataset, with a small amount being sampled from TinyStories. Our goal is to simulataneously improve performance on validation TinyStories data while also avoiding  memorization of the repeated TinyStories  sequences.  Full details are presented in the Appendix \ref{sec:appimplementation}.

\paragraph{\seqtd Preserve Benefits of Repeated Data} In the settings we study, validation performance on TinyStories benefits from repetition (by upweighting this relatively under-represented distribution). This can be seen in Figure \ref{subfig:340m} in the validation loss gap between the \emph{deduplication} baseline (where the TinyStories are seen only once) and the full repetition baseline. We observe that the validation loss attained by \seqtd outperforms the deduplication baseline (achieving validation loss comparable to the repeated baseline). This provides evidence that \seqtd can leverage the generalization benefits of repeated data. 

\paragraph{\seqtd Mitigates Memorization} Although repeating data has benefits for generalization, it also results in memorization -- as evidenced by the gap between the validation loss and the loss on training examples. However, as shown in Figure \ref{subfig:seqtiedmemloss} we find that training with \seqtd achieves a significantly higher loss on the training data, substantially closing the gap between validation and training loss (by at least 50\%). In a setting with more extensive repetition (20x), we additionally find that the mitigation of memorization by \seqtd is more pronounced.

Collectively, our findings in this section provide promising proof-of-concept evidence that \seqtd can effectively separate memorization and generalization in the presence of heterogeneous and noisy data that constitute the real-world LLM pretraining corpora.

\subsection{Practicality of \seqtd} 
\label{sec:practicalityseqtd}
In this section, we examine the robustness of \seqtd to various hyperparameters and variation in the training process. We focus on the loss on memorization examples as a measure of \seqtd's success in isolating memorization. We perform the experiments here in the small-scale TinyStories setting from Section \ref{sec:mainresults}.

\paragraph{Impact of \seqtd Activation Ratio} A key hyperparameter for \seqtd is the fraction of memorization neurons activated on any given sequence ($p$). This controls across how many documents a particular memorization neuron is activated, as well as how much memorization capacity is allocated to a given sequence. We see that \seqtd is generally robust to this parameter, with a relatively small value $p = 0.3$ performing best. The performance of \seqtd does break down at higher values 
(i.e. $p=0.7$). We hypothesize this breakdown arises as a result of insufficient shielding of memorization sinks from forgetting and further investigate this in Section \ref{subsec:seqtdmech}.

\paragraph{Impact of Model Size} Another concern is that \seqtd could necessitate using significantly larger models. In Figure \ref{subfig:wdablation}, we test the performance of \seqtd on a range of model sizes and find that it is capable of isolating memorization across model scales---as indicated by the comparably high losses on repeated sequences (relative to a normally trained model which attains nearly 0 loss). On the other hand, we find that model degradation (the increase in validation loss compared to a standard trained model of the same size) does grow as the model architecture becomes smaller. However, even on smaller models \seqtd outperforms post-hoc methods as shown in Figure \ref{subfig:seqtiedtradeoff}. Thus, while model size plays a role in the success of \seqtd, the method has benefits in small models as well.

\paragraph{Robustness to Masking Noise}
Implementing \seqtd requires that repetitions of the same sequence be presented with a consistent mask over the memorization sink neurons. Now, we investigate the robustness of \seqtd when this masking occurs inconsistently. Whenever a repeated sequence is encountered, we randomly perturb its ID with probability $d$. Our results in Figure \ref{subfig:seqidnoise} show that \seqtd is robust to relatively small values of $d$, up to 10\%. This suggests that having some amount of noise in the sequence IDs is permissible. On the other hand, when sequence IDs are highly inconsistent across repetitions (50\% noise), \seqtd fails to isolate memorization as effectively. This verifies that consistency of sequence IDs across repetitions is an important  factor behind its success.

\subsection{Exploring the Mechanism of \seqtd}
\label{subsec:seqtdmech}

Recall that the motivation for \seqtd: to redirect sequence-specific memorization towards sink neurons. Earlier, we made an intuitive hypothesis that this could be accomplished by ensuring that these memorization sink neurons are activated less frequently. Here, we formalize this argument via empirical and theoretical investigations.

\begin{figure}
    \centering
    \includegraphics[width=0.35\linewidth]{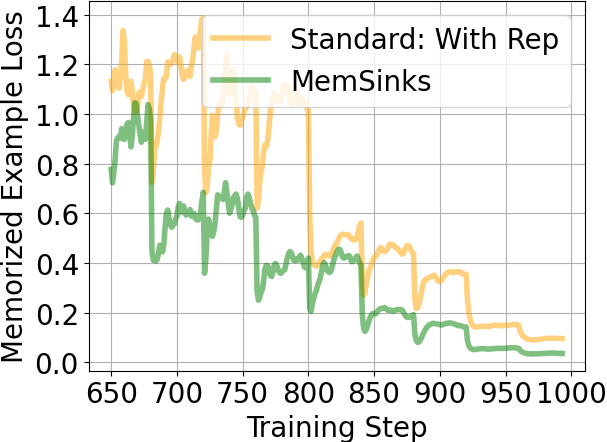}
    \caption{\textbf{Learning and Forgetting Dynamics of Standard Training and \seqtd}. We track the training loss on a single memorized example across the training trajectory for both standard training and \seqtd. We find that \seqtd experiences lower amplitude learning-forgetting cycles than standard training.}
    \label{fig:trainingdynamics}
\end{figure}

\paragraph{Dynamics of Memorization Sink Neurons} We first study the learning dynamics of the memorization sinks. We rerun our TinyStories pretraining setup with a single repeated sequence that is observed every 40 gradient steps. We track the loss of this sequence over the course of training for both standard training and \seqtd. Recall that in \seqtd, the training loss on a sequence uses a forward pass with the shared neurons and the sequence's assigned memorization neurons activated. We track this training loss as it determines the size of the update experienced by the model from observing the sequence. We see that later in training, standard training continues to experience high-amplitude learning/forgetting cycles. \seqtd, on the other hand experiences less such fluctuations, maintaining a lower train loss on the repeated sequence. Our findings suggest that memorization sink neurons fit memorization and experience protection from forgetting.

\paragraph{How Memorization Accumulates} Previously, we saw that  sequence-specific memorization accumulates quickly in memorization sinks (relative to standard training). In Theorem \ref{thm:normalforgetting}, we formally demonstrate that the amount of forgetting experienced on a given example depends on the number of steps taken on other examples, as well as the similarity of the other examples with the memorized example (measured in the cosine similarity). Based on this result, we show that the amount of memorization in the shared neurons can be upper bounded by a quantity dependent on the masking ratio, while it can be lower bounded in the memorization neurons. Our results theoretically mirror the trends seen in Figure \ref{subfig:dropoutratio}.

\section{Discussion}

In this work, we have studied a crucial challenge in the responsible deployment of large-language models: removing memorized information without harming general model capabilities. Although extensive prior work has focused on \emph{post-hoc} removal strategies, we highlight an important shortcoming of this approach: memorization can often leverage general capabilities -- making degradation-free post-hoc removal impractical. Our findings argue for the need to more explicitly structure models to allow post-hoc unlearning to be performed reliably. Towards this paradigm, our work sheds light on important considerations for promoting this structure -- highlighting why naive but intuitive approaches can be impractical. On the other hand, we uncover a framework for leveraging training dynamics to isolate memorization without tradeoffs in validation performance.

\paragraph{Limitations} As the focus of this paper has primarily been conceptual, much of the experiments are performed in small-scale and controlled settings. As such, further investigation is necessary to examine the impact of \seqtd on model capabilities that arise at larger scales. In addition, the current implementation of \seqtd crucially relies on meta-data which groups duplicated examples (so they can be given appropriate dropout masks). Although we provide evidence that \seqtd is robust to some level of inconsistency in these annotations, it is important to further investigate efficient techniques to generate this meta-data and how robust \seqtd remains in larger-scale settings. Finally, an important avenue for future work is verifying that \seqtd is robust to adversarial extraction techniques.

\paragraph{Future Directions \& Beyond Memorization} Our work has primarily focused on verbatim memorization. However, changing the annotations used by \seqtd could allow for localization at various levels of granularity. For example, using domain or topic annotations could enable unlearning different data sources or topics. Future research can also examine the benefits of localization beyond unlearning. For example, better localization of facts could enable more reliable editing of knowledge.

\section*{Impact Statement}

This work aims to advance the field of Machine Learning by proposing a method to localize and manage memorized sequences in language models. Our approach, \seqtd, directly tackles a critical privacy concern: once a sequence is memorized by a model, unlearning it is difficult without extensive fine-tuning. By systematically isolating memorized information, our method enables safer, more targeted removal of private or sensitive text.

From an ethical perspective, the ability to better control and remove specific memorized sequences has potential societal benefits, particularly regarding user privacy and data protection. On the other hand, any mechanism that manipulates model internals could be misused for censorship or selective information removal if deployed irresponsibly. Mitigating such risks requires robust governance and transparent policies on what content may be unlearned.

Our proposal also contributes to the broader conversation about data governance in large-scale AI systems. As language models continue to grow in size and capability, balancing powerful generalization with privacy safeguards will become increasingly essential. We believe this work offers a practical step forward, but emphasize the need for interdisciplinary collaboration—spanning technical, ethical, and legal domains—to ensure that memory localization technologies support socially responsible AI systems.
\section*{Acknowledgements}
We gratefully acknowledge support from Apple, NSF and the AI2050 program at Schmidt Sciences. The authors would like to thank Jacob Springer, Chen Wu, Neil Kale, and Ziqian Zhong for their very helpful feedback and discussions during the onset of the project. Additionally, the authors are extremely grateful to Fahim Tajwar, Abitha Thankaraj, Tori Qiu, and Naveen Raman for their very helpful review and feedback on the finalized version of the paper. The authors are also very grateful to Sachin Goyal for his help and advice on using the LitGPT Library. 

\bibliography{main}
\bibliographystyle{colm2025_conference}

\newpage
\appendix
\onecolumn
\renewcommand{\R}{\mathbb{R}}
\newcommand{\emb}[1]{\mathbf{\phi(#1)}} 
\newcommand{\argmax}[0]{\text{argmax}} 

\section{Extended Related Works}
\paragraph{Memorization in Classification Settings} A large body of work has focused on unlearning or forgetting memorized information from neural models, especially in the classification domain. This includes methods such as SISA~\citep{bourtoule2021machine} that exactly unlearn information by maintaining multiple model copies, and a recent influx of approximate unlearning approaches~\citep{neurips-2023-machine-unlearning} that aim to perform post-hoc procedures on a model in order to remove information in question. The study of memorization in classification settings is connected to label noise and atypical examples \cite{harutyunyan2020improvinggeneralizationcontrollinglabelnoise, feldman2020neuralnetworksmemorizewhy}. On the other hand, in this work, we concentrate on more natural or typical memorized points which often arise in the large language model setting.

\section{Implementation Details of TinyStories Training}
\label{app:impts}
\paragraph{Implementation and Architecture} We use the nanoGPT library to perform standard pretraining of the models. We train a GPT-2-Medium like architecture with embedding dimension 1024 and a 4 times expansion in the MLP layer. We used 24 layers, the resulting model had approximately 344 M parameters.

\begin{table}[h]
\caption{Hyperparameter Tuning for Standard Training}
\label{table:hparamnormal}
\centering
\begin{tabular}{l l l}
    \toprule
    {Parameter} &   {Values} \\
    \midrule                                           
    Max Learning Rate  &  \{6e-5,6e-4,6e-3\}\\
    Weight Decay & \{1e-5,1e-3,1e-1\}\\
    Min Learning Rate & $\frac{\text{Max Learning Rate}}{10}$\\
    LR Decay Steps & $\text{Total Training Steps}$\\
\midrule
\end{tabular}
\end{table}
\paragraph{Hyperparameter Tuning} We set the hyperparameters for our training as shown in Table \ref{table:hparamnormal}. For parameters denoted in sets, we tuned over choices of these parameters relative to the validation loss. We also performed early stopping on the validation loss, but generally found that overfitting did not occur.

\section{Implementation Details of Post-hoc Localization Techniques}
We generally follow the methodology used in \cite{chang2024localizationmethodsactuallylocalize} and directly used their code as released online. We restrict our attention to their Hard-Concrete and Integrated Gradients methods presented in the papers.

\paragraph{Hyperparameters: Hard Concrete}
We tuned $\lambda$, the $\ell_{1}$ loss coefficient used in training the pruning mask $M$ over the values $\{100,500,1000\}$ on a tuning set of 5 sequences. Additionally, we tuned the number of pruning iterations in the range $\{1000,2000,4000\}$. The remainder of hyperparameters were set to the optimal values reported by \cite{chang2024largelanguagemodelsacquire}. We tuned relative to the lowest validation loss achieved after dropping out the identified neurons.

\paragraph{Hyperparameters: Integrated Gradients} For Integrated Gradients, the only hyperparameter was the number of IG steps. As a result, we set this to the value reported in the paper, 16.

\paragraph{Dropout Procedure} Following the computation of mask scores by either Hard Concrete or attribution scores by Integrated Gradients, we sorted the neurons in each layer by these scores. Given a dropout parameter $r$, we dropped out an $r$ proportion of the neurons in each layer, as was performed in \cite{chang2024largelanguagemodelsacquire}.

\label{app:posthoc}

\section{Disentanglement in Standard Training}
\label{app:disentanglement}
\newcommand{\embsnoise}[1]{\emb{s}_{\textrm{mem}}^{(#1)}}

\newcommand{\embssem}[1]{\emb{s}_{\textrm{sem}^{(#1)}}}

\paragraph{Setup} We will consider a dataset with $n$ examples seen once $\mathcal{D}_{\textrm{once}} = \{(s_{i}, y_{i})\}_{i=1}^{N}$ and a memorization sequence $(s^{\textrm{mem}}, y^{\textrm{mem}})$ that is repeated $k$ times.Thus, explicitly, our training dataset consists of $\mathcal{D}_{\textrm{pre}} = \mathcal{D}_{\textrm{once}} \cup (\cup_{k} (s^{\textrm{mem}}, y^{\textrm{mem}}))$. For the purposes of this theory, we will assume that the $\R^{d_{\textrm{emb}}}$ can be partitioned into a semantic subspace $S_{\textrm{semantic}}$ and a memorization subspace $S_{\textrm{noise}}$; explicitly we will write that $\emb{s^{i}} = \begin{bmatrix} \emb{s}^{(i)}_{\textrm{mem}} & \emb{s}^{(i)}_{\textrm{mem}} \end{bmatrix}^{\top}$. We will make the assumption that $\forall i \in [1, N]$ $(\embsnoise{i})^{\top} \embsnoise{\textrm{mem}} = 0$. This can be interpreted as enabling us to uniquely identify a particular example in the noise subspace which would facilitate disentanglement of memorization. For convenience, we will consider the matrix implemented by $\mathbf{W} =\wproj \wfc$ and we will denote $\mathbf{W}^{*} = \textrm{argmin}_{L(\mathbf{W}, \mathcal{D_{\textrm{once}}}) = 0}  \Norm{{W}}_{F}$ (i.e. the minimum norm solution when $\mathbf{W}^{*}$ when the memorization example is not seen). We will assume that $\textrm{rank}(\mathbf{W^{*}})>2$. We will assume that we can achieve $0$ loss \emph{using only the subspace} $S_{\textrm{semantic}}$. In particular, denoting the singular value decomposition of $\mathbf{W}^{*} = \mathbf{U}^{*} \Sigma (\mathbf{V^{*}})^{\top}$, this translates to the condition that $\textrm{span}(\mathbf{V}^{*}) \subseteq S_{semantic}$. We will denote the term $\Delta_{\textrm{mem}} = y^{\textrm{mem}}-f(\emb{s^{\textrm{mem}}})$ (the change in prediction necessary to memorize $s^{\textrm{mem}}$) and assume $||\Delta_{\textrm{mem}}||>1$. We will assume that all the singular values of $\sigma_{i}(\mathbf{W^{*}})<\frac{1}{2k}$ $\forall i \in [1, \textrm{rank}(\mathbf{W^{*}})]$ (i.e. the generalizing solution has low norm) and that the memorization activations are unit norm $||(\mathbf{\phi(\mathbf{s})^{\textrm{mem}}_{\textrm{mem}}}||=1$). Finally, we will assume that the $\Delta_{\textrm{mem}} \perp \text{col}(\mathbf{W^{*}})$.

\textbf{Desiderata of an Unlearnable Model} Inutitively, the problem of unlearning is of efficiently finding a model which behaves as though it never saw the repeated example (i.e. recover a model such that $\mathbf{W} = \mathbf{W^{*}}$). However, we also wish to accomplish in ways that are efficient computationally and do not require significant access to $\mathcal{D}_{\textrm{pre}}$ (as the pretraining dataset can often be inaccessible in downstream updates). This precludes simply retraining the model on the set $\mathcal{D}_{\textrm{pre}} \setminus (s^{\textrm{mem}}, y^{\textrm{mem}})$. Intuitively, this is straightforward if the memorization of $(s^{\textrm{mem}}, y^{\textrm{mem}})$ makes use of the noise features in order to implement the memorization. Concretely, consider the solution $\mathbf{W} = \mathbf{W^{*}} + \Delta_{\textrm{mem}} (\mathbf{\phi(\mathbf{s})^{\textrm{mem}}_{\textrm{mem}}})^{\top}$. This solution implements memorization in a \emph{disentangled way} by isolating it in a direction (in input-space)  that is completely orthogonal to the semantically meaningful, generalizing subspace. As a result, $(s^{\textrm{mem}}, y^{\textrm{mem}})$ can be unlearned using updates that have no effect on the predictions of any other sequence. This is becase we assume that $\forall i \in [1, N]$ $(\embsnoise{i})^{\top} \embsnoise{\textrm{mem}} = 0$ and thus for any unlearning update $\mathbf{u} \mathbf{v}^{\top}$ where $\mathbf{v} \in S_{\textrm{mem}}$, we have that $\forall i$ $(\mathbf{W}+\mathbf{u}\mathbf{v}\top) \phi(\mathbf{s}^{(i)}) = \mathbf{W}\phi(\mathbf{s}^{(i)})$.
\subsection{Analysis of Natural Sequence Memorization}

We will begin by describing the two possible solutions for memorizing

\begin{definition}[Disentangled Memorizing Solution] A learned parameter $\mathbf{\hat{W}_{\textrm{dis}}}$ implements the Disentangled Memorizing Solution if $\mathbf{\hat{W}_{\textrm{dis}}}$ can be written as $\mathbf{W^{*}} + c \mathbf{u}\mathbf{v}^{\top}$
where $\mathbf{v} \in S_{\textrm{sem}}$ and $\mathbf{\hat{W}_{\textrm{dis}}}$ achieves $0$ training loss on $\mathcal{D}_{\textrm{pre}}$ 
\end{definition}

This definition follows along from our previous discussion of the desiderata of an unlearnable model. In particular, we constrain memorization to be stored using the orthogonal memorization component. This implies that the memorized sequence can be unlearned simply by taking gradient steps which do not impact the predictions on unrelated sequence. Hence, we are guaranteed not to distort the mechanisms responsible for the model's general capabilities.

Finally, we will discuss \emph{entangled memorizing solutions}--those that require modifying the general capability subspace of the model and hence run the risk of incurring model degradation in subsequent unlearning. 

\begin{definition}[Entangled Memorizing Solution] A learned parameter $\mathbf{\hat{W}_{\textrm{ent}}}$ implements the entangled memorizing solution if $\mathbf{\hat{W}_{\textrm{ent}}}$ achieves 0 training loss on $\mathcal{D}_{\textrm{pre}}$ and $\mathbf{\hat{W}_{\textrm{ent}}}$ can be written as $\sum\limits_{i=1}^{k} \sigma^{(\Delta)}_{i} \tilde{\mathbf{u}}_{i}\mathbf{v_{i}}^{\top} + \sigma_{k+1}^{\Delta} \mathbf{\tilde{u}_{k+1}}\mathbf{v}_{k+1}$ where the $\mathbf{v}_{1},...\mathbf{v}_{i}$ are the original right singular vectors of $\mathbf{W^{*}}$ and $\mathbf{\tilde{u}}_{i}$ are a potentially shifted set of singular vectors and $\mathbf{\tilde{u}}_{i+1}, \mathbf{\tilde{v}_{i+1}}$ are a new set of vectors and $\forall i \in [1,k]$ we have that $\sigma_{i}^{\Delta} \leq \sigma(\mathbf{W}^{*})_{i}+\frac{\Norm{\Delta_{\textrm{mem}}}}{k+1}$  and $\sigma_{k+1}^{\Delta} \leq \frac{||\Delta_{\textrm{mem}}||}{k+1}$ where $k= \textrm{rank}(\mathbf{W^{*}})$

Intuitively, entangled memorizing solution reuses and shifts vectors that are used by the generalizing solution. We impose the condition on the singular values to model that the implementation of memorization is ``diffused" across multiple of the generalizing semantically meaningful features.

\end{definition}
\begin{theorem}[Memorization of Natural Sequences is Entangled]. 
\label{thm:appentanglement}
Consider training a two layer linear network $f(x)= \wproj \wfc \mathbf{x}$ on $\mathcal{D}_{pre}$ with the squared loss. Suppose that $(s^{\text{mem}}, y^{\text{mem}})$ is natural. Let $\mathbf{W^{*}}_{\textrm{mem}}$ be the final function learned by gradient flow and $\mathbf{U^{*}_{\textrm{mem}}} \Sigma_{\textrm{mem}} (\mathbf{V^{*}_{\textrm{mem]}}})^{\mathbf{\top}}$ be its singular value decomposition. Then, $\mathbf{W^{*}_{\textrm{mem}}} \neq \mathbf{W_{\textrm{dis}}}$.

\end{theorem}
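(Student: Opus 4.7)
The plan is to invoke the implicit minimum-norm bias of gradient flow on two-layer linear networks, and then show that the disentangled interpolator $\mathbf{W}_{\textrm{dis}}$ cannot possibly be the minimum-Frobenius-norm interpolator of $\mathcal{D}_{\textrm{pre}}$, so gradient flow must converge elsewhere.

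First, I would compute $\|\mathbf{W}_{\textrm{dis}}\|_F^2$ explicitly. The unique disentangled interpolator consistent with all constraints has the form $\mathbf{W}_{\textrm{dis}} = \mathbf{W}^{*} + \Delta_{\textrm{mem}}\,(\phi(\mathbf{s}^{\textrm{mem}})_{\textrm{mem}})^{\top}$: the rank-one correction lies in $S_{\textrm{mem}}$, so it leaves every prediction on $\mathcal{D}_{\textrm{once}}$ intact by the orthogonality assumption $\phi(\mathbf{s}_i)_{\textrm{mem}}^{\top}\phi(\mathbf{s}^{\textrm{mem}})_{\textrm{mem}}=0$, and it fits $y^{\textrm{mem}}$ since $\|\phi(\mathbf{s}^{\textrm{mem}})_{\textrm{mem}}\|=1$. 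Because $\mathrm{span}(\mathbf{V}^{*}) \subseteq S_{\textrm{sem}}$, the columns of $(\mathbf{W}^{*})^{\top}$ lie in $S_{\textrm{sem}}$, which is orthogonal to $\phi(\mathbf{s}^{\textrm{mem}})_{\textrm{mem}} \in S_{\textrm{mem}}$. Hence the Frobenius cross-term vanishes and
\[
\|\mathbf{W}_{\textrm{dis}}\|_F^2 = \|\mathbf{W}^{*}\|_F^2 + \|\Delta_{\textrm{mem}}\|^2.
\]

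Next, I would exhibit an entangled interpolator $\tilde{\mathbf{W}}_{\textrm{ent}}$ whose Frobenius norm is strictly smaller. The idea, licensed by the entangled-solution definition, is to absorb $\Delta_{\textrm{mem}}$ by slightly shifting all $k=\mathrm{rank}(\mathbf{W}^{*})$ existing left singular vectors of $\mathbf{W}^{*}$ (keeping the right singular vectors $\mathbf{v}_i \in S_{\textrm{sem}}$) and adding one new rank-one term, with singular values satisfying $\sigma_i^{\Delta} \leq \sigma_i(\mathbf{W}^{*}) + \|\Delta_{\textrm{mem}}\|/(k+1)$ and $\sigma_{k+1}^{\Delta} \leq \|\Delta_{\textrm{mem}}\|/(k+1)$. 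Assuming the shifted left vectors remain orthonormal (so the resulting decomposition is an SVD of $\tilde{\mathbf{W}}_{\textrm{ent}}$), a direct expansion combined with the hypothesis $\sigma_i(\mathbf{W}^{*}) < 1/(2k)$ gives
\[
\|\tilde{\mathbf{W}}_{\textrm{ent}}\|_F^2 \leq \|\mathbf{W}^{*}\|_F^2 + \tfrac{2\|\Delta_{\textrm{mem}}\|}{k+1}\sum_{i=1}^{k}\sigma_i(\mathbf{W}^{*}) + \tfrac{\|\Delta_{\textrm{mem}}\|^2}{k+1} < \|\mathbf{W}^{*}\|_F^2 + \tfrac{\|\Delta_{\textrm{mem}}\|}{k+1} + \tfrac{\|\Delta_{\textrm{mem}}\|^2}{k+1},
\]
which is strictly less than $\|\mathbf{W}_{\textrm{dis}}\|_F^2$ once $\|\Delta_{\textrm{mem}}\|>1$ and $k\geq 2$.

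Finally, I would invoke the classical implicit-bias result (Gunasekar et al.) that gradient flow on the product $\wproj \wfc$ with sufficiently small initialization converges in function space to the minimum-Frobenius-norm (equivalently minimum-nuclear-norm) interpolator of the training constraints. Since $\tilde{\mathbf{W}}_{\textrm{ent}}$ interpolates $\mathcal{D}_{\textrm{pre}}$ with strictly smaller Frobenius norm than $\mathbf{W}_{\textrm{dis}}$, the gradient flow limit $\mathbf{W}^{*}_{\textrm{mem}}$ cannot equal $\mathbf{W}_{\textrm{dis}}$. The main obstacle will be constructing $\tilde{\mathbf{W}}_{\textrm{ent}}$ rigorously: I must show that keeping the right singular vectors $\mathbf{v}_i$ of $\mathbf{W}^{*}$ fixed while shifting the left singular vectors and appending one new rank-one direction can simultaneously satisfy all $N$ once-seen interpolation constraints, map $\phi(\mathbf{s}^{\textrm{mem}})$ to $y^{\textrm{mem}}$, and keep the shifted left vectors orthonormal. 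This is where the ``natural'' hypothesis on $(\mathbf{s}^{\textrm{mem}}, y^{\textrm{mem}})$ and the assumption $\Delta_{\textrm{mem}} \perp \mathrm{col}(\mathbf{W}^{*})$ are crucial: together they guarantee that the memorization correction can be carried along new output directions that do not conflict with the existing singular structure on $\mathcal{D}_{\textrm{once}}$.
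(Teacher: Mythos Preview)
Your proposal is correct and follows essentially the same route as the paper: compute $\|\mathbf{W}_{\textrm{dis}}\|_F^2=\|\mathbf{W}^*\|_F^2+\|\Delta_{\textrm{mem}}\|^2$ via orthogonality, upper-bound $\|\mathbf{W}_{\textrm{ent}}\|_F^2$ using the singular-value constraints in the entangled-solution definition together with $\sigma_i(\mathbf{W}^*)<1/(2k)$ and $\|\Delta_{\textrm{mem}}\|>1$, and then invoke the minimum-Frobenius-norm implicit bias of gradient flow on two-layer linear nets to conclude $\mathbf{W}^*_{\textrm{mem}}\neq\mathbf{W}_{\textrm{dis}}$. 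The only cosmetic differences are that the paper cites \citet{varre2023on} (with the orthogonal-feature initialization hypothesis) rather than Gunasekar et al., and that you explicitly flag the need to verify an entangled interpolator with the prescribed singular-value profile actually exists---a point the paper's proof leaves implicit as well.
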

We will first provide some intuition for this statement. Intuitively, $f$ could either memorize $(s^{\textrm{mem}}, y^{\textrm{mem}})$ by (a) making use of the noise component of $\emb{s^{\textrm{mem}}}$ -- thereby isolating the change in the model induced by memorization orthogonal to the generalizing features or (b) by implementing the memorization by shifting the generalizing features. We will consider the setting in which it is possible to implement memorization in both ways. However, unlearning memorization implemented with the generalizing features carries the risk of distorting general capabilities whereas memorization isolated to the orthogonal space can be easily and robustly removed. 

As a result, the primary intuition behind Theorem \ref{thm:appentanglement} is to demonstrate that the training dynamics will prefer more \emph{harder to unlearn} solutions when memorizing natural sequences.

Before providing the proof of Theorem \ref{thm:appentanglement}, we will restate an important result on the implicit bias of gradient flow in two-layer neural networks from \citet{varre2023on}. We will first specify a set initializations for which the result holds. 
\begin{definition}[Orthogonal Feature Initialization] A two layer linear network $\mathbf{W_{\textrm{proj}} \mathbf{W_\textrm{fc}}}$ has orthogonal feature initialization if $\mathbf{W_\textrm{fc}} = \sqrt{2\gamma} \mathbf P$ where $\mathbf{P} \in \{\mathbb{R}^{d \times l} | \mathbf{P}\mathbf{P}^{\top}=\mathbf{I}\} $ for some $\gamma > 0$ and $\mathbf{W_{\textrm{proj}}} = 0$

\end{definition}
\begin{theorem} 
\label{thm:varre}
Consider training a two layer neural network parameterized by $(\mathbf{W_{\textrm{proj}}}, \mathbf{W}_{\textrm{fc}})$ with gradient flow and orthogonal feature initialization. Let $\mathbf{\beta} =\mathbf{W_{\textrm{proj}}}\mathbf{W}_{\textrm{fc}}$. We will denote the set of optimal parameters for a given pretraining dataset $\mathcal{D}_{\textrm{pre}}$ as $I(\mathcal{D}_{\textrm{pre}})$ 
\begin{enumerate}
    \item The parameters converge to a global optima (i.e. $\lim_{t \rightarrow \infty} (\wproj(t), \wfc(t)) \in I(\mathcal{D}_{\textrm{pre}}))$
    \item The effective linear predictor converges to a min-norm solution, formally $\lim_{t \rightarrow \infty} \mathcal{\beta}(t) = \beta^{*}$ where $\beta^{*} = \textrm{argmin}_{\beta \in I(D_{\textrm{pre}})} ||\beta||_{F}$
\end{enumerate}
\end{theorem}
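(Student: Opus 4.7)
The plan is to establish both claims via a conservation law for gradient flow on two-layer linear networks, a \L{}ojasiewicz-type convergence argument together with strict-saddle avoidance, and an SVD-alignment argument identifying the limit. First I would stack the data in matrices $X$ and $Y$ and write the gradient flow ODEs for the squared loss: with residual $R = X\beta^\top - Y$, $\dot{\wproj} = -R^\top X \wfc^\top$ and $\dot{\wfc} = -\wproj^\top R^\top X$. Differentiating $\wfc \wfc^\top - \wproj^\top\wproj$ and substituting yields pairwise-cancelling terms, so this matrix-valued quantity is conserved along the flow. The orthogonal feature initialization ($\wproj(0) = 0$, $\wfc(0) = \sqrt{2\gamma} P$ with $PP^\top = I$) fixes its value at $2\gamma I$, giving the invariant $\wfc(t)\wfc(t)^\top = 2\gamma I + \wproj(t)^\top\wproj(t)$ for all $t \ge 0$. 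In particular, $\sigma_{\min}(\wfc(t)) \ge \sqrt{2\gamma}$ throughout training.

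For Part 1 (convergence to a global optimum), monotone loss decrease $\dot L \le 0$ combined with the conservation law (taking traces gives $\norm{\wproj}_F^2 = \norm{\wfc}_F^2 - 2\gamma d$, with $\norm{\wfc}_F$ controlled by the sublevel sets of $L$) gives a bounded trajectory. A \L{}ojasiewicz inequality, which applies since $L$ is analytic in the parameters, upgrades this to convergence of $(\wproj(t), \wfc(t))$ to a critical point. Because $\wfc$ stays uniformly full-rank, the parametrization $(\wproj, \wfc) \mapsto \beta$ is a submersion along the trajectory; combined with the fact that all saddle points of the two-layer linear loss are strict (Kawaguchi-style landscape analysis), the limit cannot be a saddle and must lie in $I(\mathcal{D}_{\textrm{pre}})$.

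For Part 2 (min-Frobenius-norm bias), the KKT condition for $\beta^*$ is that its rows lie in $\text{row}(X)$. Differentiating $\beta = \wproj\wfc$ yields
\begin{equation*}
\dot\beta = -R^\top X \wfc^\top\wfc - \wproj\wproj^\top R^\top X,
\end{equation*}
whose second term has row space automatically inside $\text{row}(X)$. For the first term, I would diagonalize in the basis adapted to the SVD of $X$ and argue that the two-layer dynamics decouple along its right-singular directions, reducing the matrix ODE to a family of independent scalar ODEs whose unique equilibria are exactly the corresponding singular values of $\beta^*$. Recombining these modes yields $\beta^\infty = \beta^*$.

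The main obstacle is this mode-decoupling step. A priori $\wfc^\top\wfc$ does not commute with $X^\top X$, so reduction to scalar ODEs is not immediate. The cleanest route is to show that any component of the dynamics misaligned with the right-singular vectors of $X$ is itself conserved, and therefore remains at its initial value inherited from $\wfc(0) = \sqrt{2\gamma} P$; an alternative is to recast parameter-space gradient flow as mirror flow on $\beta$ under the Riemannian metric induced by the conserved quantity, whose unique minimizer on the interpolating manifold $I(\mathcal{D}_{\textrm{pre}})$ is the Frobenius-norm projection $\beta^*$. Either route requires careful bookkeeping of how the conservation law $\wfc\wfc^\top - \wproj^\top\wproj = 2\gamma I$ couples the two layers' dynamics, which is the core technical work.
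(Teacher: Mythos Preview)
The paper does not prove this theorem. It is explicitly introduced as ``an important result on the implicit bias of gradient flow in two-layer neural networks from \citet{varre2023on}'' and is restated without proof, then invoked as a black box in the proof of Theorem~\ref{thm:appentanglement}. So there is no paper-side proof to compare against.

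That said, your sketch is a reasonable outline of how such a result is established in the literature. The conservation law $\wfc\wfc^\top - \wproj^\top\wproj \equiv 2\gamma I$ is exactly the balancedness invariant that underpins the analysis, and the reduction to decoupled scalar ODEs along the SVD directions of the data is the standard route to identifying the limit as the min-Frobenius-norm interpolant. Two cautions. First, strict-saddle avoidance results (center-stable manifold arguments \`a la Lee et al.) are generic statements about random initialization; here $\wproj(0)=0$ is a deterministic, measure-zero point, so you cannot invoke them off the shelf. The actual argument in this setting exploits the specific structure of the orthogonal feature initialization directly rather than going through a generic saddle-avoidance theorem. Second, your Part~2 decoupling step is, as you note, the real work: the clean way to do it is to show that under this initialization the singular vectors of $\beta(t)$ remain aligned with those of $X^\top Y$ (or the normal-equations target) for all $t$, which is what the conservation law buys you; the mirror-flow reformulation you mention is an equivalent and perhaps more transparent route. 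Either way, the bookkeeping is nontrivial but well-trodden, and your outline is pointed in the right direction.
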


Using the result from Theorem \ref{thm:varre}, the proof of Theorem \ref{thm:appentanglement} is straightforward and simply requires comparing the Frobenius norms of the disentangled and entangled memorizing solutions. In particular, because $||\phi(\mathbf{s}^{(i)\top})||_{2}=1$, we must have that $||\mathbf{W_{\textrm{dis}}}||_{F}^{2} = ||\mathbf{W^{*}}||_{F}^{2} + ||\Norm{\Delta_{\textrm{mem}}}_{2}^{2}$ (the remaining singular value comes from the orthogonal rank-1 term $\mathbf{u}\mathbf{v}^{\top}$). On the other hand, consider the entangled memorizing solution. We have that 
$||\mathbf{W_\textrm{ent}}||_{F}^{2} \leq ||\mathbf{W}^{*}||_{F}^{2} +  2\sum\limits_{i=1}^{k} \sigma(\mathbf{W^{*}})_{i}\frac{||\Delta_{\textrm{mem}}||}{k+1} + \frac{||\Delta_{\textrm{mem}}||^{2}}{k+1} \leq ||\mathbf{W^{*}}||_{F}^{2} +2\frac{||\Delta_{\textrm{mem}}||_{2}^{2}}{k} \leq ||\mathbf{W^{*}}||_{F}^{2} + ||\Delta_{\textrm{mem}}||_{2}^{2} $

Hence $\mathbf{W}_{\textrm{dis}}$ cannot be the minimum norm solution and gradient flow will not converge to it.

\section{Implementation of Gradient Masking}
\label{app:impgradmask}
We generally follow the implementation outlined in \cite{cloud2024gradientroutingmaskinggradients}. We partition each MLP layer into memorization and generalization neurons. We tune this delineation of memorization and generalization neurons by the proportion of generalization neurons $g$. We additionally partition our dataset into examples seen once and the repeated examples. During training, we mask the gradients in each MLP layer such that the gradients from the repeated examples update only a the memorization block, whereas gradients of all other examples are routed to the generalization block.
\begin{table}
\caption{Hyperparameter Tuning for Sequence-Tied Dropout}
\label{table:hparamgradmask}
\centering
\begin{tabular}{l l l}
    \toprule
    {Parameter} &   {Values} \\
    \midrule                                           
    Max Learning Rate  &  \{6e-5,6e-4,6e-3\}\\
    Weight Decay & \{1e-5,1e-3,1e-1\}\\
    Min Learning Rate & $\frac{\text{Max Learning Rate}}{10}$\\
    LR Decay Steps & $\text{Total Training Steps}$\\
    $g$ & \{0.7,0.9,0.95\}\\
\midrule
\end{tabular}
\end{table}

\paragraph{Hyperparameter Tuning} We show the hyperparameters tuned for this method in Table \ref{table:hparamgradmask}. Hyperparameter denoted in sets are tuned relative to the validation loss \emph{before} dropping out memorization neurons. 

\section{Analysis of Gradient Masking}
\label{app:analysisgradmask}

\paragraph{Data Distribution} Like the previous analysis, we will consider a dataset of $N$ $(s_{i}, y_{i})$  pairs which are only seen once, and a special $(s_{\textrm{mem}}, y_{\textrm{mem}})$ that is repeated in training and we wish to isolate. Thus, the total training set $D_{\textrm{pre}} = \{(s_{i}, y_{i})\}_{i=1}^{N} \cup (s_{\textrm{mem}}, y_{\textrm{mem}})$.  

\paragraph{Model Structure} We will use the same two layer neural network structure $f(x) = \mathbf{W}_{\textrm{proj}} \mathbf{W}_{\textrm{fc}} x$. However, for simplicity , we will consider only training the second layer (i.e. $\mathbf{W}_{\textrm{proj}}$. Further more, to implement gradient masking, we will partition the hidden space of the MLP into two components -- a memorization component $\mathbf{W_{\textrm{mem}} }$ and a generalizing component $\mathbf{W_{\textrm{gen}} }$. In particular, we will assume the structure $\mathbf{W_{\textrm{proj}}} = \begin{bmatrix} \mathbf{W_{\textrm{gen}} } & \mathbf{W}_{\textrm{mem}}\end{bmatrix}$. Thus the prediction $f(x)$ could be written as $\mathbf{W_{\textrm{gen}} } (\mathbf{W_\textrm{fc}} x)_{\textrm{gen}} + \mathbf{W_{\textrm{mem}} } (\mathbf{W_\textrm{fc}} x)_{\textrm{mem}}$. Here, we use the shorthand  $(\mathbf{W_\textrm{fc}} x)_{\textrm{gen}}$ to denote the entries of the hidden layer activations that are input to the generalization neurons, while $(\mathbf{W_\textrm{fc}} x)_{\textrm{mem}}$ denotes those that are input to the memorization neurons. We will denote the dropped out model $f_d$ as the model with the generalization neurons dropped (zero-ed) out. Concretely $f_{d} = \begin{bmatrix} \mathbf{W_{\textrm{gen}}} & \mathbf{0} \end{bmatrix} \mathbf{W_{\textrm{fc}}} \mathbf{x}$.

\paragraph{Training Procedure} We introduce the following simplified version of the gradient masking scheme we examine empirically. Concretely, we prescribe the following SGD-like update rules for $\mathbf{W_{\textrm{gen}}}$ and  $\mathbf{W_{\textrm{mem}}}$, given a learning rate $\gamma$. To keep track of the parameter values at different time-steps, we will use a superscript. 
\begin{equation*}
 \mathbf{W}_{\text{gen}}^{T+1} = \begin{cases}
      \mathbf{W}_{\text{gen}}^{T}+ \gamma \nabla_{\mathbf{W_{\textrm{gen}}}^{T}} \mathcal{L}(f_{\theta}(x), y) , & \text{if}\ x\neq s_{\textrm{mem}} \\
      \mathbf{W_{\text{gen}}}^{T}, & \text{otherwise}
    \end{cases}
    \label{eq:allupdate}
\end{equation*}
Similarly, we will consider the following for $\mathbf{W_{\text{mem}}}$: 
\begin{equation*}
 \mathbf{W}_{\text{mem}}^{T+1} = \begin{cases}
      \mathbf{W}_{\text{mem}}^{T}+\gamma \nabla_{\mathbf{W_{\textrm{mem}}}^{T}} \mathcal{L}(f_{\theta}(x), y) , & \text{if}\ x=s_{\textrm{mem}} \\
      \mathbf{W_{\text{mem}}}^{T}, & \text{otherwise}
    \end{cases}
    \label{eq:isoupdate}
\end{equation*}
In both cases, consider $\mathcal{L}$ to be the squared loss function. Intuitively, $\mathbf{W}_{\text{mem}}$ receives gradient only from the $s_{\textrm{mem}}$ and \emph{no other examples} whereas $\mathbf{W}_{\text{gen}}$ receives gradient from \emph{all but} $s_{\textrm{mem}}$. Thus, by training construction, we could say that the mapping $(s_{\textrm{mem}}, y_{\textrm{mem}})$ is stored only in $\mathbf{W}_{\text{mem}}$ as its gradient cannot change any of the other parameters.

\paragraph{Ground-Truth Unlearned Model} When performing unlearning, we wish to obtain a model that behaves as if it had never observed $(s_{\textrm{mem}}, y_{\textrm{mem}})$. We denote such a model (trained under standard training with parameters $\gamma$ as $\hat{f}$).

\paragraph{Initialization} We assume that $\mathbf{W_\textrm{gen}}^{(0)} = 0$ and that $\mathbf{W_\textrm{mem}}^{(0)} = 0$. We consider $\mathbf{W_\textrm{fc}}$ is initialized at a non-zero value and does not change throughout training.

\paragraph{Order of Observations} We will assume that both $f$ and $\hat{f}$ observe the data points in the same order (i.e. $(s_{1}, y_{1}),...,(s_{N_{\textrm{total}}}, y_{\textrm{total}})$. We will assume that $f$ observes $(s_{\textrm{mem}}, y_{\textrm{mem}})$ once at timestep $T$.
With the setup in place, we will now introduce a lower bound in the difference between the predictions of a dropped out model $f_{d}$ and the \textit{ground-truth} unlearned model $\hat{f}$. Concretely, we show:

\begin{theorem}[Removing Memorization After Gradient Routing] Suppose that we train $f$ on $\mathcal{D}_{\textrm{LM}} \cup  (\mathbf{s}_{\textrm{mem}}, y_{\textrm{mem}})$. Suppose that there have been $N$ gradient steps taken since the last time $\mathbf{W}_{\textrm{mem}}$  was updated. Additionally, assume that $\hat{f}$ is trained using the same hyperparameters but without observing $(\mathbf{s}^{\textrm{mem}}, y^{\textrm{mem}})$. 
\begin{equation*}
    \Norm{f_{d}(\mathbf{x}) -\hat{f}(\mathbf{x})}_{2} \geq (N) \gamma (1-\gamma)^{N} c^{2} \Norm{\mathbf{x}}_{2}
\end{equation*}
where $c = \min_{i,j} \mathbf{s}_{i}^{\top} \mathbf{s}_{j}>0$.
\end{theorem}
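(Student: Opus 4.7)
The plan is to track how the gradient-masked trajectory $\mathbf{W}_{\text{gen}}^{t}$ diverges from the standard-trained trajectory $\hat{\mathbf{W}}_{\text{proj}}^{t}$ over the $N$ steps that occur after the final update to $\mathbf{W}_{\text{mem}}$ at some step $T$, and to isolate the divergence that is \emph{causally driven} by the then non-zero $\mathbf{W}_{\text{mem}}$. I would proceed in three stages: (i) characterize $\mathbf{W}_{\text{mem}}^{T+1}$ explicitly after the memorization step; (ii) set up and unroll the discrepancy recursion for $\mathbf{W}_{\text{gen}}$ over the subsequent $N$ non-memorized steps, isolating the contamination signal inherited from $\mathbf{W}_{\text{mem}}^{T+1}$; and (iii) evaluate the resulting difference at the test point $\mathbf{x}$.

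For stage (i), by the gradient-masking rule $\mathbf{W}_{\text{mem}}^{t} = \mathbf{0}$ for every $t \leq T$ (the memorization block is never touched by non-memorized examples), so a single squared-loss update at step $T$ produces the rank-one matrix $\mathbf{W}_{\text{mem}}^{T+1} = -\gamma\, r_{\text{mem}}^{T}\, (\wfc \mathbf{s}_{\text{mem}})_{\text{mem}}^{\top}$, where $r_{\text{mem}}^{T}$ is the residual on the memorized example. For stage (ii), on each subsequent step $T+k$ involving a non-memorized $(\mathbf{s}_{i_k}, y_{i_k})$, the gradient-masked update to $\mathbf{W}_{\text{gen}}$ picks up an extra ``contamination'' term proportional to $\mathbf{W}_{\text{mem}}^{T+1}(\wfc \mathbf{s}_{i_k})_{\text{mem}}(\wfc \mathbf{s}_{i_k})_{\text{gen}}^{\top}$ that has no analogue in the update to $\hat{\mathbf{W}}_{\text{gen}}$. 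Writing $\Delta^{t} := \mathbf{W}_{\text{gen}}^{t} - \hat{\mathbf{W}}_{\text{gen}}^{t}$, this yields a recursion
\begin{equation*}
\Delta^{T+k+1} = \Delta^{T+k}\bigl(I - \gamma (\wfc \mathbf{s}_{i_k})_{\text{gen}}(\wfc \mathbf{s}_{i_k})_{\text{gen}}^{\top}\bigr) + C_{k},
\end{equation*}
whose unrolling produces $N$ additive contaminations, the oldest contracted by at most $(1-\gamma)^{N}$. The positivity assumption $c = \min_{i,j} \mathbf{s}_{i}^{\top}\mathbf{s}_{j} > 0$ enters twice: once to guarantee that all $C_{k}$ point in the same direction so they accumulate constructively rather than cancel, and a second time in stage (iii), when evaluating at $\mathbf{x}$, so that the $\text{gen}$-block pairing $(\wfc \mathbf{s}_{i_k})_{\text{gen}}^{\top}(\wfc \mathbf{x})_{\text{gen}}$ is bounded below. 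Combining $N$ constructive contributions, each at least $\gamma c^{2}\norm{\mathbf{x}}_{2}$ in magnitude, with the uniform worst-case contraction $(1-\gamma)^{N}$ yields the stated bound.

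The hardest part will be that $\hat{f}(\mathbf{x}) = \hat{\mathbf{W}}_{\text{gen}}(\wfc \mathbf{x})_{\text{gen}} + \hat{\mathbf{W}}_{\text{mem}}(\wfc \mathbf{x})_{\text{mem}}$ carries a $\hat{\mathbf{W}}_{\text{mem}}$ contribution with no counterpart in $f_{d}(\mathbf{x}) = \mathbf{W}_{\text{gen}}(\wfc \mathbf{x})_{\text{gen}}$, so a naive block-wise subtraction contaminates the discrepancy with $\hat{\mathbf{W}}_{\text{mem}}$-terms that could in principle partially cancel the accumulated signal. I would handle this either by projecting $f_{d}(\mathbf{x}) - \hat{f}(\mathbf{x})$ onto the $(\wfc \mathbf{s}_{\text{mem}})_{\text{mem}}$-direction so that only the $\mathbf{W}_{\text{mem}}^{T+1}$-driven component survives, or by a direct sign argument using the positive-alignment assumption to show the $\hat{\mathbf{W}}_{\text{mem}}$ contribution cannot offset more than a constant fraction of the contamination. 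The inductive unrolling of the recursion, together with verifying that signs align uniformly across $k$, is where the bulk of the bookkeeping would sit; the remaining calculations amount to collecting geometric factors.
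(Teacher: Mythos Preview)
Your three-stage plan --- compute the rank-one $\mathbf{W}_{\text{mem}}^{T+1}$, unroll the discrepancy recursion $\Delta^{T+k+1} = \Delta^{T+k}\bigl(I - \gamma \mathbf{z}_k \mathbf{z}_k^\top\bigr) + C_k$, then use $c>0$ to make the $N$ contamination terms sum constructively against a worst-case contraction $(1-\gamma)^N$ --- is exactly the paper's argument.

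The only divergence is the ``hardest part'' you flag, namely a possible $\hat{\mathbf{W}}_{\text{mem}}$-contribution to $\hat f(\mathbf{x})$ that might partially cancel the accumulated signal. In the paper's setup this never arises: the comparison model (denoted $\tilde f$ in the proof) is trained under the \emph{same} gradient-masking rule as $f$, so $\tilde{\mathbf{W}}_{\text{mem}}$ is updated only when the memorized example appears, and since $\tilde f$ never sees $(\mathbf{s}_{\text{mem}}, y_{\text{mem}})$ one has $\tilde{\mathbf{W}}_{\text{mem}}\equiv 0$ throughout. (The phrase ``same hyperparameters'' in the formal statement is meant to include the masking scheme; the looser informal version in the main text that says ``trained in a standard manner'' is not what the proof actually uses.) Consequently $f_d(\mathbf{x}) - \tilde f(\mathbf{x}) = (\mathbf{W}_{\text{gen}} - \tilde{\mathbf{W}}_{\text{gen}})(\wfc\mathbf{x})_{\text{gen}}$ exactly, and the projection / sign-argument machinery you propose is unnecessary. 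With that simplification your outline coincides with the paper's proof.
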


We now proceed with the proof:

\textbf{Proof} 
We will use the notation $\mathbf{W_{\textrm{mem}}}, \mathbf{W}_{\textrm{gen}}$ to denote the parameters of $f$ and $\mathbf{\tilde{W}_{\textrm{mem}}},\mathbf{\tilde{W}_{\textrm{gen}}} $ to denote the parameters of $\tilde{f}$ (which does not observe the $(s_{\textrm{mem}}, y_{\textrm{mem}})$ but sees all other points in the same order.

Observe that, in the case of $f$, $\mathbf{W_{\textbf{mem}}}$ remains constant throughout the training process as does $\tilde{\mathbf{W}}_{\text{mem}} = 0$ by Equation \ref{eq:isoupdate}. Next, recall that by Equation \ref{eq:allupdate}, we have that $\mathbf{W}_{\text{gen}}^{T} = \tilde{\mathbf{W}}_{\text{gen}}^{T-1}$. We then have the following. 
\begin{align*}
\mathbf{W_{\text{gen}}}^{(T+N-1)} = \mathbf{W_{\text{gen}}}^{(T)} \prod_{j=1}^{N-1} (\mathbf{I}-\gamma \mathbf{s}_{j}\mathbf{s}_{j}^{\top}) &+ \gamma \sum\limits_{k=1}^{N-1} \mathbf{y}_{k}\mathbf{s}_{k}^{\top} \prod_{j=k+1}^{N-1} (\mathbf{I}-\gamma \mathbf{s}_{j}\mathbf{s}_{j}^{\top})\\
&-\gamma \sum\limits_{k=1}^{N-1} (\mathbf{W_{\text{mem}}}\mathbf{s}_{k}\mathbf{s}_{k}^{\top})\prod_{j=k+1}^{N-1} (\mathbf {I}-\gamma\mathbf{y}_{j}\mathbf{y}_{j}^{\top})
\end{align*}
and 
\begin{align*}
\tilde{\mathbf{W}}_{\text{gen}}^{(T+N-2)} = \tilde{\mathbf{W}}_{\text{gen}}^{(T-1)} \prod_{j=1}^{N-1} (\mathbf{I}-\gamma \mathbf{s}_{j}\mathbf{s}_{j}^{\top}) &+ \gamma \sum\limits_{k=1}^{N-1} \mathbf{y}_{k}\mathbf{s}_{k}^{\top} \prod_{j=k+1}^{N-1} (\mathbf{I}-\gamma \mathbf{s}_{j}\mathbf{s}_{j}^{\top})
\end{align*}

Note that since $\mathbf{W}_{\text{all}}^{T} = \tilde{\mathbf{W}}_{\text{all}}^{T-1}$ and the order of observing data points is the same the parameter-space difference between $\mathbf {W}_{\text{all}}$ and $\tilde{\mathbf{W}}_{\text{all}}$ is calculated as 
\begin{equation*}
\mathbf {W}_{\text{gen}}- \tilde{\mathbf{W}}_{\text{gen}} = -\gamma \sum\limits_{k=1}^{N-1} (\mathbf{W_{\text{mem}}}\mathbf{s}_{k}\mathbf{s}_{k}^{\top})\prod_{j=k+1}^{N-1} (\mathbf {I}-\gamma\mathbf{s}_{j}\mathbf{s}_{j}^{\top})
\end{equation*}
Next observe that 
\begin{align*}
f^{(T+n-1)}_{\text{d}}(\mathbf{x}) -\tilde{f}^{(T+n-2)}(\mathbf{x}) &= (\mathbf{W}_{\text{gen}}^{(T+n-1)}+0*\mathbf{W}_{\text{mem}}^{(T+n-1)})\mathbf{x}\\
&- (\tilde{\mathbf{W}}_{\text{gen}}^{(T+n-2)}+\tilde {\mathbf{W}}_{\text{mem}}^{(T+n-2)})\mathbf{x} \\ 
&= (\mathbf{W}_{\text{gen}}^{(T+n-1)}- \mathbf{W}_{\text{gen}}^{(T+n-2)})\mathbf{x}\\ 
&= (-\gamma \sum\limits_{k=1}^{N-1} (\mathbf{W_{\text{mem}}}\mathbf{s}_{k}\mathbf{s}_{k}^{\top})\prod_{j=k+1}^{N-1} (\mathbf {I}-\gamma\mathbf{s}_{j}\mathbf{s}_{j}^{\top}))\mathbf{x}
\end{align*}
Thus, we wish to lower bound 
\begin{equation*}
\Norm{(-\gamma \sum\limits_{k=1}^{N-1} (\mathbf{W_{\text{mem}}}\mathbf{s}_{k}\mathbf{s}_{k}^{\top})\prod_{j=k+1}^{N-1} (\mathbf {I}-\gamma\mathbf{s}_{j}\mathbf{s}_{j}^{\top}))\mathbf{x}}_{2}
\end{equation*}
First we note that $\prod_{j=k+1}^{N-1} (\mathbf {I}-\gamma\mathbf{s}_{j}\mathbf{s}_{j}^{\top}))\mathbf{x} \in \text{span}(\{\mathbf{s}_{j}\}_{j=1}^{N-1})$ and by the repeated application of variational characterization of eigenvalues $\Norm{\prod_{j=k+1}^{N-1} (\mathbf {I}-\gamma\mathbf{s}_{j}\mathbf{s}_{j}^{\top}))\mathbf{x}}_{2} \geq (1-\gamma)^{N-1} \Norm{\mathbf{x}}_{2}$. Now, we will recall that $\mathbf{W}_{\text{mem}} = \alpha \mathbf{y}_{mem} \mathbf{s}_{mem}^{T}$. We will use this to rewrite the above term as: 
\begin{equation*}
|\gamma|\Norm{\mathbf{s}_{k}}_{2} |\sum\limits_{k=1}^{N-1} (\mathbf{s}_{N}^{\top} \mathbf{s}_{k} \mathbf{s}_{k}^{\top} \prod_{j=k+1}^{N-1} (\mathbf{I}-\gamma \mathbf{s}_{j} \mathbf{s}_{j}^{\top}) \mathbf{x)}|
\end{equation*}
Upon repeated application of the prior result and Lemma 2, we then have that 

\begin{align*}
 |\sum\limits_{k=1}^{N-1} (\mathbf{s}_{N}^{\top} \mathbf{s}_{k} \mathbf{s}_{k}^{\top} \prod_{j=k+1}^{N-1} (\mathbf{I}-\gamma \mathbf{s}_{j} \mathbf{s}_{j}^{\top}) \mathbf{x)}| &\geq
|\sum\limits_{k=1}^{N-1} (1-\gamma)^{N-1}\Norm{\mathbf{x}}_{2}c^{2}|\\
&\geq \sum\limits_{k=1}^{N-1} (1-\gamma)^{N-1} \Norm{\mathbf{x}}_{2} c^{2}\\
& \geq (N-1) (1-\gamma)^{N-1}  \Norm{\mathbf{x}}_{2}c^{2}
\end{align*}
by positivity of the summands we can eliminate the absolute value symbol in step 2 and we have used that for all $i \leq N-1$, $(1-\gamma)^{i}\geq(1-\gamma)^{N-1}$ which follows from $\gamma < 1$. This yields the desired lower bound.


\begin{lemma}[Lower Bound of Outer Product in $\text{span}(\{\mathbf{k}_{1},...,\mathbf{k}_{n}\})$] Suppose $\min_{i,j} \mathbf{k}_{i}^{\top} \mathbf{k}_{j} \geq c > 0$, $\mathbf{x} \in \text{span}(\{\mathbf{k}_{1},...,\mathbf{k_{n}}\})$, and consider $i, j \in [1,n]$. Then 

\begin{equation*}
\Norm{\mathbf{k}_{i}\mathbf{k}_{j}^{\top} \mathbf{x}}_{2} \geq c \Norm{\mathbf{x}}_{2}
\end{equation*}
\label{lem:lowerbound}

\end{lemma}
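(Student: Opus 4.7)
The plan is to reduce the bound to an inner-product estimate by exploiting the rank-one structure. Since $\mathbf{k}_j^\top \mathbf{x}$ is a scalar, we have $\norm{\mathbf{k}_i \mathbf{k}_j^\top \mathbf{x}}_2 = \norm{\mathbf{k}_i}_2 \cdot |\mathbf{k}_j^\top \mathbf{x}|$. The hypothesis applied with $l=m=i$ gives $\mathbf{k}_i^\top \mathbf{k}_i \geq c$, so $\norm{\mathbf{k}_i}_2 \geq \sqrt{c}$. The real task is therefore to lower-bound $|\mathbf{k}_j^\top \mathbf{x}|$ in terms of $\norm{\mathbf{x}}_2$ using only the positive-inner-product assumption.

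First, I would use the hypothesis that $\mathbf{x}\in\text{span}(\{\mathbf{k}_1,\dots,\mathbf{k}_n\})$ to write $\mathbf{x} = \sum_{l=1}^n \alpha_l \mathbf{k}_l$ for some coefficients $\alpha_l$. Expanding the inner product gives
\[
\mathbf{k}_j^\top \mathbf{x} \;=\; \sum_{l=1}^n \alpha_l\, (\mathbf{k}_j^\top \mathbf{k}_l) \;\geq\; c \sum_{l=1}^n \alpha_l,
\]
where the last inequality uses $\mathbf{k}_j^\top \mathbf{k}_l \geq c$ termwise, provided the $\alpha_l$ are non-negative. Second, I would pair this with the triangle-inequality upper bound $\norm{\mathbf{x}}_2 \leq \sum_l \alpha_l \norm{\mathbf{k}_l}_2$; under the implicit normalization $\norm{\mathbf{k}_l}_2 = 1$ (which is the regime in which the lemma is invoked, since the outer proof also assumes unit-norm $\mathbf{s}_j$), this becomes $\norm{\mathbf{x}}_2 \leq \sum_l \alpha_l$, so $|\mathbf{k}_j^\top \mathbf{x}| \geq c\, \norm{\mathbf{x}}_2$. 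Multiplying by $\norm{\mathbf{k}_i}_2 \geq \sqrt{c} \geq c$ (in the unit-norm regime one in fact has $\norm{\mathbf{k}_i}_2 = 1$) then yields the claimed inequality $\norm{\mathbf{k}_i \mathbf{k}_j^\top \mathbf{x}}_2 \geq c\, \norm{\mathbf{x}}_2$.

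The main obstacle is justifying the sign assumption on the coefficients $\alpha_l$: for a generic linear combination, cancellations can drive $|\mathbf{k}_j^\top \mathbf{x}|$ small even when $\norm{\mathbf{x}}_2$ is not (for instance, with two near-parallel unit vectors $\mathbf{k}_1,\mathbf{k}_2$ and $\mathbf{x}=\mathbf{k}_1-\mathbf{k}_2$). The cleanest path around this is to argue that in the application of interest, $\mathbf{x}$ is actually obtained by applying the operator $\prod_{j=k+1}^{N-1}(\mathbf{I}-\gamma \mathbf{s}_j \mathbf{s}_j^\top)$ to a vector that already admits a non-negative expansion in the $\mathbf{s}_j$; since each factor $(\mathbf{I}-\gamma \mathbf{s}_j\mathbf{s}_j^\top)$ only subtracts a non-negative multiple of $\mathbf{s}_j$ for $\gamma$ small enough, iterating preserves (or can be chosen to preserve) this conic structure. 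An alternative that avoids positivity is a Gram-matrix argument: letting $G_{lm} = \mathbf{k}_l^\top \mathbf{k}_m$, the assumption implies $G \succeq c\,\mathbf{1}\mathbf{1}^\top$ when restricted to the conic hull, and one can translate this into the required pointwise bound by expressing $\mathbf{x}$ in an orthonormal basis of the span and tracking the overlap with $\mathbf{k}_j$. I would carry out the positivity-based version first, flagging the sign condition explicitly so that its use in the outer proof is transparent.
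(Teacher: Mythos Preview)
Your approach matches the paper's exactly: the paper's proof is the two-line argument that $\mathbf{k}_j^\top\mathbf{x}$ is a scalar, hence $\norm{\mathbf{k}_i\mathbf{k}_j^\top\mathbf{x}}_2 = \norm{\mathbf{k}_i}_2\,|\mathbf{k}_j^\top\mathbf{x}|$, followed by the assertion that ``since $\mathbf{x}$ is in the span of $\{\mathbf{k}_i\}$, we have $|\mathbf{k}_j^\top\mathbf{x}|\geq c\norm{\mathbf{x}}_2$.'' That is the same rank-one reduction you propose, and the paper does not supply any further justification for the inner-product bound.

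You are right to flag the sign obstruction, and your counterexample is valid: the lemma as stated is false. With $\mathbf{k}_1,\mathbf{k}_2$ unit vectors at inner product $c$ close to $1$ and $\mathbf{x}=\mathbf{k}_1-\mathbf{k}_2$, one gets $|\mathbf{k}_1^\top\mathbf{x}|/\norm{\mathbf{x}}_2=\sqrt{(1-c)/2}\to 0$, not $\geq c$. The paper simply asserts the step you identify as problematic; your proposal, by explicitly adding the conic (non-negative coefficient) hypothesis and the unit-norm normalization, is the honest version of the same argument. So there is no divergence in method between you and the paper; rather, you have located a gap that the paper's proof also contains and proposed the natural additional hypotheses under which the argument goes through. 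If you want the write-up to align with how the lemma is actually used in the surrounding theorem, stating it for $\mathbf{x}$ in the conic hull with unit-norm generators is the cleanest fix.
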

\textbf{Proof} $\mathbf{k}_{j}^{\top} \mathbf{x}$ is a scalar so we have that $\Norm{\mathbf{k}_{i} \mathbf{k}_{j}^{\top}\mathbf{x}}_{2} = \Norm{\mathbf{k}_{i}} |\mathbf{k}_{j}^{\top} \mathbf{x}|$. Since $\mathbf{x}$ is in the span of $\{\mathbf{k}_{i}\}$, we have that $|\mathbf{k}_{j}^{\top} \mathbf x| \geq c \Norm{\mathbf{x}}_{2}$ 

\section{Implementation of \seqtd}
\label{app:seqtdimp}
\begin{table}[h]
\caption{Hyperparameter Tuning for Sequence-Tied Dropout}
\label{table:hparamseqtd}
\centering
\begin{tabular}{l l l}
    \toprule
    {Parameter} &   {Values} \\
    \midrule                                           
    Max Learning Rate  &  \{6e-5,6e-4,6e-3\}\\
    Weight Decay & \{1e-5,1e-3,1e-1\}\\
    Min Learning Rate & $\frac{\text{Max Learning Rate}}{10}$\\
    LR Decay Steps & $\text{Total Training Steps}$\\
    $g$ & \{0.1,0.3,0.5,0.7\}\\
    $p$ & \{0.1,0.3,0.5,0.7\}\\
\midrule
\end{tabular}
\end{table}
\paragraph{Model Architecture and Implementation} We used the same model architecture as reported in Appendix \ref{app:impts}. We set the first $g$ fraction of neurons in each MLP as the ``shared neurons" and left the remaining $1-g$ fraction as the memorization neuron pool. We applied the dropout layer after the GeLU activation function, prior to the downprojection layer.

\paragraph{Assignment of Sequence IDs} We sequentially numbered the sequences in the TinyStories training set and use these indices as the sequence IDs.

\paragraph{Hyperparameter Tuning} In Table \ref{table:hparamseqtd}, we show the hyperparameter ranges tuned over for \seqtd. Hyperparameters denoted in sets were tuned over using the validation loss \emph{when the memorization is dropped out}.

\section{Analysis of \seqtd}

\subsection{Formalization of Training Process}
\paragraph{Architecture} For simplicity, we study the training dynamics of an MLP layer $f(x) = \wproj \wfc \mathbf{x}$, where $\wproj \in \R^{d_{\text{h}} \times d_\text{emb}}$, $\wfc \in \R^{d_{\text{emb}} \times d_\text{h}}$. Here, $d_\text{emb}$ refers to the embedding size of the model and $d_\text{h}$ refers to the number of hidden neurons in the MLP. Given a sequence $\mathbf{s}$, we consider that $f$ takes in the final position embedding of $\mathbf{s}$, which we denote $\emb{s}$ and directly outputs the logits of the next token (i.e. $\text{softmax}(f(\emb{s}))$ is a probability distribution over the next token in sequence $\mathbf{s}$.

For convenience, we will denote the hidden activations of sequence $\mathbf{s}$ as $\mathbf{z}(\mathbf{s})$. In our analysis, we will assume that the activation space of $\mathbf{z}(\mathbf{s})$ can be split into two subspaces $\mathbf{z}(\mathbf{s}) = \begin{bmatrix} \mathbf{z}(\mathbf{s})_{\text{shared}} &\mathbf{z}(\mathbf{s})_{\text{mem}} \end{bmatrix}$. These components will correspond to our choice of shared and memorization neurons. We will additionally consider $\wfc$ frozen throughout training and mainly study the training dynamics of $\wproj$. Thus for convenience, we will also decompose $\wfc$ into two column-blocks (corresponding to the shared and memorization neurons, respectively): $\wproj= \begin{bmatrix} \mathbf{W}_{\text{proj}}^{\text{shared}} & \mathbf{W}_{\text{proj}}^{\text{mem}} \end{bmatrix}$

\paragraph{Data Setup}
We will treat our data as (embedding, next token) pairs. We consider we have a repeated sequence $\mathbf{s}^{\text{mem}}$ with corresponding next token $\mathbf{e}^{\text{mem}}$. Next, we will assume we have a large dataset of sequences seen only once during training $\mathcal{D}_{\text{once}} = \{(\mathbf{s}^{\text{(1)}}, \mathbf{e}^{(1)}),..., (\mathbf{s}^{\text{(N)}}, \mathbf{e}^{(N)}\})\}$. For simplicity, we will consider the case where $\forall i$ $\mathbf{e}^{(i)} \neq \mathbf{e}^{\text{mem}}$. Since we treat $\wproj$ as frozen, we will also define $\epsilon_{\text{shared}} = \min \mathbf{z}(\mathbf{s}^{\text{mem}})_{\text{shared}}^{\top} \mathbf{z}(\mathbf{s}^{(i)})_{\text{shared}}$ and likewise that $\epsilon_{\text{mem}} = \min_{i} \mathbf{z}(\mathbf{s}^{\text{mem}})_{\text{mem}}^{\top} \mathbf{z}(\mathbf{s}^{(i)})_{\text{mem}}$. Intuitively, these quantities lower bound how similar the activations in the shared and memorization neurons are between the repeated example and any other example. For simplicity we will assume that the $||\mathbf{z^{(i)}}||_{2} = 1$ for all $\mathbf{z^{(i)}}$ and that the parameter $||\mathbf{W}_{\text{proj}}||_{2}<\frac{C_{\text{proj}}}{2}$ remains bounded throughout training. Finally we assume that the ouput embeddings $e$ are mutually orthogonal.

\paragraph{Training Process} In standard training, we study the training trajectory (with learning rate $\gamma$) of minimizing the cross entropy loss with respect to the parameter $\wproj$. We consider training with batch size 1.
\label{app:analyticalgradmasksetup}
\subsection{Forgetting Under Normal Training Dynamics}
To begin, we introduce a result on the softmax with bounded inputs

\begin{theorem}[Softmax on $\ell_{\infty}$ bounded vectors]
Consider $x \in \mathbb{R}^{d}$ and suppose $\Norm{x}_{\infty} \leq C$. Then $\max_{i} (\sigma(x))_{i} \leq \frac{e^{2k}}{d-1}$ and $\min_{i} (\sigma(x))_{i} \geq \frac{e^{-2k}}{d}$
\label{thm:softmax}
\end{theorem}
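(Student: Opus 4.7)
The plan is to prove both bounds by exploiting the standard identity $\sigma(x)_i = 1/(1 + \sum_{j \neq i} e^{x_j - x_i})$ together with elementary inequalities on the exponents, using the hypothesis $\norm{x}_\infty \leq C$ (I will treat the $k$ in the bounds as the constant $C$ from the hypothesis, which seems to be a notational slip in the statement).

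For the upper bound on $\max_i \sigma(x)_i$, I would fix any index $i$ and write
\begin{equation*}
\sigma(x)_i = \frac{1}{1 + \sum_{j \neq i} e^{x_j - x_i}}.
\end{equation*}
Because $x_j \geq -C$ and $x_i \leq C$, every exponent satisfies $x_j - x_i \geq -2C$, hence each of the $d-1$ summands in the denominator is at least $e^{-2C}$. Dropping the $1$ in the denominator only weakens the bound, so
\begin{equation*}
\sigma(x)_i \;\leq\; \frac{1}{(d-1)e^{-2C}} \;=\; \frac{e^{2C}}{d-1}.
\end{equation*}
Since this bound is independent of $i$, it applies to the maximum as well, giving the first claim.

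For the lower bound on $\min_i \sigma(x)_i$, I would bound the numerator and denominator of $\sigma(x)_i = e^{x_i} / \sum_j e^{x_j}$ separately: $e^{x_i} \geq e^{-C}$ from $x_i \geq -C$, and $\sum_j e^{x_j} \leq d\,e^{C}$ from $x_j \leq C$. Combining these gives
\begin{equation*}
\sigma(x)_i \;\geq\; \frac{e^{-C}}{d\,e^{C}} \;=\; \frac{e^{-2C}}{d},
\end{equation*}
again uniformly in $i$, which yields the second claim.

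There is no real obstacle here; both inequalities are immediate once one decides whether to attack the summation form or the ratio form of the softmax. The only subtlety worth flagging is the statement-level mismatch between the hypothesis constant $C$ and the constant $k$ appearing in the conclusion, which I would reconcile by identifying $k = C$ (and noting this convention at the start of the proof). If instead the intended bound was supposed to be tighter — e.g. $e^{2C}/(d-1+e^{2C})$ from not dropping the $1$ — the same argument delivers it with no extra work.
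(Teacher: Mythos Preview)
Your proof is correct and essentially matches the paper's own argument: both bound the softmax ratio directly using $-C \leq x_j \leq C$, with at most cosmetic differences (you use the form $1/(1+\sum_{j\neq i} e^{x_j-x_i})$ for the upper bound while the paper keeps $e^{x_i}/\sum_j e^{x_j}$ throughout, and for the lower bound the paper splits off the $i$th denominator term before bounding rather than bounding the whole sum at once). You are also right that the $k$ in the conclusion is a typo for $C$.
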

\begin{proof}
$\sigma(x)_{i} = \frac{\exp(x_{i})}{\sum\limits_{j \in d} \exp(x_{j})} \leq \frac{\exp(C)}{\exp(C)+(d-1)\exp(-C)} = \frac{\exp(2C)}{\exp(2C)+(d-1)} \leq \frac{\exp(2C)}{d-1}$. Likewise $\sigma(x)_{i} \geq \frac{\exp(-C)}{\exp(-C)+(d-1)\exp(C)} = \frac{\exp(-2C)}{\exp(-2C)+(d-1)} \geq \frac{\exp(-2C)}{d}$.
\end{proof}
\label{app:normalforgetting}
Given our assumption that $||\mathbf{z}^{(i)}||_{2} = 1$ and the bounded parameter norm assumption $||\mathbf{W}_{\text{proj}}||_{2}<\frac{C_{\text{proj}}}{2}$, it follows that $||\wproj \mathbf{z}^{(i)}||_{\infty} \leq  \frac{C_{V}}{2}$. By Theorem \ref{thm:softmax}, we have that the entries of $\frac{\exp(-C_{\text{proj}})}{d_{\text{emb}}} \leq \sigma(f(\mathbf{z}^{(i)}) \leq \frac{\exp(C_{\text{proj}})}{d_{\text{emb}}-1}$, element wise. In the remainder of the theory, we denote $c_{min} = \frac{\exp(-C_{\text{proj}})}{d_{\text{emb}}}$ and $c_{max} = \frac{\exp(C_{\text{proj}})}{d_{\text{emb}-1}}$.

We will first  show that the memorization of the repeated sequence $\mathbf{s}_{\text{mem}}$ is forgotten when we take intervening steps on non-repeated sequences x$\mathbf{s}^{(i)},...,\mathbf{s}^{(i+n)}$. Formally, we have the following proposition. Formally, suppose that at after step $i$, we have just seen $\mathbf{s}^{\text(mem)}$. Then we will show that the logit $\mathbf{e}^{(\text{mem})}$ decreases during subsequent training steps $i$ through $i+n$. For this analysis, we will focus on the dynamics the shared neurons.
\begin{theorem}[Forgetting in Standard Training]
Suppose we take a gradient step on $\mathbf{s}^{(\text{mem})}$ at gradient step $i$ and subsequently make gradient updates on non-repeated sequences $\mathbf{s}^{(i)},...,\mathbf{s}^{(i+m)}$. After the $m$ gradient steps, we have that $(\mathbf{e}^{\text{mem}})^{\top} f^{(i+m)}(\mathbf{z}^{\text{mem}}) \leq (\mathbf{e}^{\text{mem}})^{\top} f^{(i)}(\mathbf{z}^{\text{mem}}) - \gamma m \epsilon c_{min}$. 
\label{thm:normalforgetting}
\begin{proof}
Only the parameter $\wproj$ changes throughout training, so we can restrict our attention to its dynamics. We have that the gradient of $\mathbf{W}_{\text{proj}}$ on the sequence-next token pair $(\mathbf{z}, \mathbf{e})$
\begin{equation*}
    \frac{\partial L}{\partial \mathbf{W}_{\text{proj}}} = (\mathbf{e}- \sigma(f(\mathbf{z}))\mathbf{z}^{\top}
\end{equation*}

Now let $\mathbf{W_{\text{proj}}}^{(i)}$ denote the parameter value of $\mathbf{W_{\text{proj}}}$ after the $i$-th observation.
We have that 

\begin{equation}
\mathbf{W_{\text{proj}}}^{(i+m)} =\mathbf{W_{\text{proj}}}^{(i)} +\gamma \sum\limits_{j=1}^{m} (\mathbf{e}^{(j)} - \sigma(f^{(j+i)}(\mathbf{z^{(i}}))\mathbf{z^{(i)}}^{\top}
\end{equation}

where we will denote $f^{(j+i)}$ as the model with parameter $\mathbf{W_{\text{proj}}}$. Then, we have that the logit on the correct next token for memorized example $\mathbf{z}^{\text{mem}}$ is 
\begin{equation*}
(\mathbf{e}^{\text{mem}})^{\top} f^{(i+m)}(\mathbf{z^{\text{mem}}}) = (\mathbf{e}^{\text{mem}})^{\top} f^{(i)}(\mathbf{z}^{\text{mem}}) + (\mathbf{e^{\text{mem}}})^{\top} \gamma \sum\limits_{j=1}^{m} (\mathbf{e}^{(j)} - (\mathbf{z}^{\text{mem}}) \sigma(f^{(j+i)}(\mathbf{z^{(i}}))\mathbf{z^{(i)}}^{\top} (\mathbf{z}^{\text{mem}})
\end{equation*}

Now, since we have that the token embeddings are orthogonal, we can rewrite this as
\begin{equation*}
(\mathbf{e}^{\text{mem}})^{\top} f^{(i+m)}(\mathbf{z^{\text{mem}}}) = (\mathbf{e}^{\text{mem}})^{\top} f^{(i)}(\mathbf{z}^{\text{mem}}) -(\mathbf{e^{\text{mem}}})^{\top} \gamma \sum\limits_{j=1}^{m} \sigma(f^{(j+i)}(\mathbf{z^{(i}}))\mathbf{z^{(i)}}^{\top} (\mathbf{z}^{\text{mem}})
\end{equation*}
Note that by the assumption of bounded norm for $\wproj$. we have that $(\mathbf{e^{\text{mem}}})^{\top} \sigma(f^{(j+i)}(\mathbf{z^{(i}})) \geq c_{min}$ (defined earlier). Note also the assumption that $\mathbf{z^{(i)}}^{\top} (\mathbf{z}^{\text{mem}}) \geq \epsilon$ $\forall i$. This implies that 
\begin{equation}
    (\mathbf{e}^{\text{mem}})^{\top} f^{(i+m)}(\mathbf{z^{\text{mem}}}) \leq (\mathbf{e}^{\text{mem}})^{\top} f^{(i)}(\mathbf{z}^{\text{mem}})- \gamma \sum\limits_{j=1}^{m} \epsilon c_{\min}
\end{equation}

This immediately yields our desired claim.
\end{proof}
\label{thm:forgettingdynamics}
\end{theorem}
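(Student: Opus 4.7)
The plan is to track the relevant logit $(\mathbf{e}^{\text{mem}})^{\top} f^{(i+m)}(\mathbf{z}^{\text{mem}})$ directly by unrolling the SGD updates on $\wproj$. Since $\wfc$ and all activations $\mathbf{z}^{(\cdot)}$ are frozen and $f(\mathbf{z}) = \wproj \mathbf{z}$, the only dynamics to track live in $\wproj$. I would first compute the per-step gradient of the cross-entropy loss on a single pair $(\mathbf{z}, \mathbf{e})$, which is $(\sigma(f(\mathbf{z})) - \mathbf{e})\mathbf{z}^{\top}$, so each step adds the rank-one term $\gamma(\mathbf{e}^{(j)} - \sigma(f^{(i+j-1)}(\mathbf{z}^{(j)})))\mathbf{z}^{(j)\top}$ to $\wproj$. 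Summing the $m$ rank-one updates and evaluating $(\mathbf{e}^{\text{mem}})^{\top}(\cdot)\mathbf{z}^{\text{mem}}$ expresses the target logit as $(\mathbf{e}^{\text{mem}})^{\top} f^{(i)}(\mathbf{z}^{\text{mem}}) + \gamma\sum_{j=1}^{m}(\mathbf{e}^{\text{mem}})^{\top}(\mathbf{e}^{(j)} - \sigma(f^{(i+j-1)}(\mathbf{z}^{(j)})))(\mathbf{z}^{(j)\top}\mathbf{z}^{\text{mem}})$.

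The key structural step is that the output embeddings are assumed mutually orthogonal, and since $\mathbf{e}^{(j)} \neq \mathbf{e}^{\text{mem}}$ for every intervening example, $(\mathbf{e}^{\text{mem}})^{\top}\mathbf{e}^{(j)} = 0$. This kills the ``reinforcing'' part of each update and leaves only the strictly negative contribution $-\gamma(\mathbf{e}^{\text{mem}})^{\top}\sigma(f^{(i+j-1)}(\mathbf{z}^{(j)}))(\mathbf{z}^{(j)\top}\mathbf{z}^{\text{mem}})$. I would then lower bound each factor: $\mathbf{z}^{(j)\top}\mathbf{z}^{\text{mem}} \geq \epsilon$ directly from the definition of $\epsilon$, and $(\mathbf{e}^{\text{mem}})^{\top}\sigma(f^{(i+j-1)}(\mathbf{z}^{(j)})) \geq c_{\min}$ by applying Theorem \ref{thm:softmax} with $C = C_{\text{proj}}/2$, which is available because $\|\wproj\|_2 \le C_{\text{proj}}/2$ and $\|\mathbf{z}^{(j)}\|_2 = 1$ imply $\|f^{(i+j-1)}(\mathbf{z}^{(j)})\|_\infty \le C_{\text{proj}}/2$. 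Summing $m$ such bounds yields the claimed $\gamma m \epsilon c_{\min}$ drop.

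The mildly delicate point, and the only real obstacle, is that the softmax lower bound must hold \emph{uniformly} across all intermediate iterates $\wproj^{(i)}, \ldots, \wproj^{(i+m-1)}$, not just at the endpoints. The theorem's setup sidesteps this by taking the parameter-norm bound $\|\wproj\|_2 < C_{\text{proj}}/2$ as a running assumption throughout training, so I would simply invoke it at each step of the telescoping sum rather than attempt to prove it inductively from the update rule (which would require a separate contraction-style argument on the rank-one perturbations). With that assumption in hand the remainder is purely routine: unroll, cancel the orthogonal cross-terms, and apply the two scalar lower bounds termwise.
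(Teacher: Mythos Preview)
Your proposal is correct and follows essentially the same route as the paper's proof: unroll the rank-one SGD updates on $\wproj$, use orthogonality of the output embeddings to kill the $(\mathbf{e}^{\text{mem}})^{\top}\mathbf{e}^{(j)}$ cross-terms, and then apply the two scalar lower bounds $\epsilon$ and $c_{\min}$ termwise. Your explicit remark that the softmax lower bound must hold uniformly at every intermediate iterate (and that this is guaranteed by the standing norm assumption on $\wproj$) is a clarification the paper leaves implicit.
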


Next, we will show that the seqTD accumulates memorization in the memorization neurons, as formalized in the following theorem. This theorem also crystalizes some key quantities relating to gradient interference. First of all, we see that the forgetting depends on the number of \emph{further gradient steps} taken after seeing $\mathbf{s}^{\text{mem}}$. Secondly, we observe that the impact of forgetting dynamics is influcned by how \emph{similar} the activation of neurons are amongst different examples: controlled by $\epsilon$. The first observation immediately suggests that if some neurons were activated less often, then those neurons would be effectively ``store" more memorization.

\subsection{Analysis of \seqtd}
\begin{theorem}[\seqtd Accumulates Memorization in Memorization Neurons] Consider training \seqtd, where the memorization neurons are activated on a $p$ fraction of non-repeated examples. We will assume that the model is trained from 0 initialization. Denote the MLP $f_{\text{mem-dropped}}$ as the model with memorization neurons dropped out and $f_{\text{gen-dropped}}$ as the model with the generalization neurons dropped out. Suppose that the model is trained for $N$ total steps and the repeated sequence $\mathbf{s}^{\text{mem}}$ is observed k times. Then we have at the end of training\\
\begin{enumerate}
\item $(\mathbf{e}^{mem})^{\top} f_{\text{gen-only}}^{(n)}(\emb{s^{\text{mem}}})  \leq \gamma k (1-c_{\text{min}}) - \gamma (N-k) \epsilon_{\text{shared}} c_{\text{min}}$
\item $(\mathbf{e}^{mem})^{\top} f_{\text{mem-only}}^{(n)}(\emb{s^{\text{mem}}})  \geq \gamma k (1-c_{max}) - \gamma (N-k) \rho \epsilon_{\text{mem}} c_{\text{max}}$
\end{enumerate}
where $c_{min}$ and $c_{max}$ are constants depending on an upper bound of the parameter norm of $\wproj$.
\label{thm:SeqTD}
\end{theorem}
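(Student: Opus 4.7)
The plan is to follow the same bookkeeping strategy as in Theorem~\ref{thm:normalforgetting}, now tracking the two blocks of $\wproj$ separately because the sink-dropout mask routes each gradient step to only one (or both) of them. Since $\wfc$ is frozen, it suffices to expand the closed-form SGD iterates for $\wproj$ from zero initialization, evaluate the resulting linear predictor on $\emb{s^{\text{mem}}}$, and project along $\mathbf{e}^{\text{mem}}$.

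For the shared block, every one of the $N$ steps contributes an update of the form $\gamma(\mathbf{e}^{(j)} - \sigma(f^{(j)}(\mathbf{z}^{(j)})))\,\mathbf{z}(\mathbf{s}^{(j)})_{\text{shared}}^{\top}$. Taking the inner product of $\mathbf{W}_{\text{proj}}^{\text{shared},(N)}\,\mathbf{z}(\mathbf{s}^{\text{mem}})_{\text{shared}}$ with $\mathbf{e}^{\text{mem}}$ and invoking mutual orthogonality of output embeddings, all cross terms involving $\mathbf{e}^{(j)}$ for $\mathbf{e}^{(j)}\neq \mathbf{e}^{\text{mem}}$ vanish. What remains splits into (i) a positive contribution from the $k$ memorizing steps, bounded above by $\gamma k(1-c_{\min})$ via Theorem~\ref{thm:softmax} applied to $(\mathbf{e}^{\text{mem}})^{\top}\sigma \ge c_{\min}$, and (ii) a negative interference contribution from the $N-k$ non-repeated steps, whose magnitude is bounded below by $\gamma(N-k)\,\epsilon_{\text{shared}}\,c_{\min}$ using the definition of $\epsilon_{\text{shared}}$ together with the same softmax lower bound. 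Subtracting yields claim~(1).

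The memorization block is handled symmetrically, except that its update sum runs only over the $k$ repeats of $\mathbf{s}^{\text{mem}}$ and the $\rho(N-k)$ non-repeated sequences whose dropout mask retains the sink neurons assigned to $\mathbf{s}^{\text{mem}}$. For the positive contribution one uses $(\mathbf{e}^{\text{mem}})^{\top}\sigma \le c_{\max}$ to obtain a lower bound of $\gamma k(1-c_{\max})$; for the interference one upper-bounds $(\mathbf{e}^{\text{mem}})^{\top}\sigma \le c_{\max}$ and $\mathbf{z}(\mathbf{s}^{(j)})_{\text{mem}}^{\top}\mathbf{z}(\mathbf{s}^{\text{mem}})_{\text{mem}} \le \epsilon_{\text{mem}}$ over the $\rho(N-k)$ contributing indices. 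Subtracting gives claim~(2).

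The main obstacle I anticipate is the sign direction on the memorization-block inner product: as defined in Appendix~\ref{app:analyticalgradmasksetup}, $\epsilon_{\text{mem}}$ is a minimum of dot products, whereas bounding the interference term requires an \emph{upper} bound on the same quantity. I would resolve this by reinterpreting $\epsilon_{\text{mem}}$ as characterizing the typical mask-induced overlap between memorization-sink activations of distinct sequences -- precisely what sequence-dependent dropout is designed to keep small -- or equivalently by restating the claim using $\max_i \mathbf{z}(\mathbf{s}^{\text{mem}})_{\text{mem}}^{\top}\mathbf{z}(\mathbf{s}^{(i)})_{\text{mem}}$. A secondary subtlety is that the count $\rho(N-k)$ is an expectation over the random dropout mask, so the bound is naturally stated in expectation; folding this into the $\rho$ factor (or invoking standard concentration on Bernoulli sums) is the cleanest route. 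Once these are addressed, comparing the two bounds shows the desired separation: for large $N$ the shared-block logit is driven negative by $(N-k)\epsilon_{\text{shared}}c_{\min}$, while the mem-only logit stays positive because the interference is attenuated by both $\rho$ and the smaller $\epsilon_{\text{mem}}$, matching the empirical dynamics in Figure~\ref{fig:trainingdynamics}.
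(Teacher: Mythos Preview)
Your proposal is correct and follows essentially the same route as the paper: expand the per-block SGD updates of $\wproj$ from zero initialization, project onto $\mathbf{e}^{\text{mem}}$, use orthogonality of output embeddings to kill cross terms, and then bound the $k$ memorizing contributions and the interfering contributions separately via the softmax bounds $c_{\min},c_{\max}$ and the activation-overlap constants. The paper's proof is in fact terser than yours and does not explicitly address either of the two subtleties you flag; in particular, it simply asserts the mem-block interference is ``lower bounded by $\gamma\epsilon_{\text{mem}}c_{\max}$'' without reconciling the min/max direction of $\epsilon_{\text{mem}}$, and it treats $p(N-k)$ as a deterministic count rather than an expectation, so your awareness of these points is a plus rather than a gap.
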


\begin{proof}
Our argument resembles the proof of Theorem \ref{thm:forgettingdynamics}, and we will rely on the intuition therein. For reference, we will write the gradients for the components of $\wproj$ below. 

\begin{equation*}
    \frac{\partial L}{\partial \mathbf{W}_{\text{proj}}^{\text{shared}}} = (\mathbf{e}- \sigma(f(\mathbf{z}))\mathbf{z}_{\text{shared}}^{\top}
\end{equation*}
and likewise 
\begin{equation*}
    \frac{\partial L}{\partial \mathbf{W}_{\text{proj}}^{\text{mem}}} = (\mathbf{e}- \sigma(f(\mathbf{z}))\mathbf{z}_{\text{mem}}^{\top}
\end{equation*}
We will first examine $(\mathbf{e^{\text{mem}}})^{\top} f^{(n)}_{gen-only} (\mathbf{z}^{\text{mem}})$. At any point in training, recall that we can upper and lower bound the value $c_{min} \leq (\mathbf{e}^{\text{(i)}})^{\top}\sigma(f(\mathbf{z}^{\text{mem}})) \leq c_{max}$. As such, observe that $(\mathbf{e}^{\text{(i)}})^{\top}\sigma(f(\mathbf{z}^{\text{mem}}))$ received $k$ updates upper bounded by $\gamma (1-c_{min})$ (from the $k$ obervations of $\mathbf{z}^{\text{mem}}$ and $(N-k)$ updates upper bounded by $\gamma \epsilon_{\text{shared}} c_{min}$ (from the remaining $(N-k)$ observations of the $\mathbf{z}^{(i)}$. This yields the desired claim for (1). 

Now, for claim (2) observe that the component $(\mathbf{e^{\text{mem}}})^{\top} f^{(n)}_{mem-only} (\mathbf{z}^{\text{mem}})$ receives $k$ updates lower bounded by $(1-c_{max})$ (again, from the $k$ observations of $\mathbf{z}^{\text{mem}}$, but only $p(N-k)$ updates from other observations, which can likewise be lower bounded by $\gamma\epsilon_{\text{mem}}c_{max}$ This immediately implies the desired claim in (2)
\end{proof}

This theorem formalizes the notion that memorization ``accumulates" in the memorization neurons when they are shielded from the interference of other sequences sufficiently. In our theory, the extent to which this occurs is dependent on two quantities (1) the fraction of \emph{non-repeated} sequences for which the memorization neurons are active and (2) the similarity of activations of the repeated example and non-repeated example in the memorization neurons. Relative to algorithm design, however, we will generally only have control over $\rho$ and so we will consider $\epsilon_{\text{shared}} = \epsilon_{\text{mem}}$ out of convenience. Our analysis demonstrates that when $\rho$ is set appropriately low. Some calculation demonstrates that when $\rho < \frac{c_{min}}{c_{max}} - \frac{k}{(N-k)}(c_{max}-c_{min})$, then we will have a seperation in the logits of $\mathbf{s}^{\text{mem}}$ where the memorization neurons primarily contain the memorized example. 
\section{Large Scale Implementation Details}
\label{sec:appimplementation}
Here, we report all experimental and implementation details for the large-scale experiments conducted in this paper

\paragraph{Pretraining Data Mixture} We consider pretraining datasets of 1 and 2 billion parameters (respectively for the 360M model and 1.7B model). The majority of pretraining tokens are sourced from a subset of the SlimPajama dataset \cite{shen2024slimpajamadcunderstandingdatacombinations}. However, we consider a setting in which we have an \textit{under-sampled} domain of 5000 TinyStories examples. We ensure that these 5000 TinyStories \cite{eldan2023tinystoriessmalllanguagemodels} documents are included in the pretraining data before filling the remaining tokens from SlimPajamas. 
\begin{table}[h]
\caption{Hyperparameters for Standard Training (360M)}
\label{table:hparamstdlarge}
\centering
\begin{tabular}{l l l}
    \toprule
    {Parameter} &   {Values} \\
    \midrule                                           
    Weight Decay  &  0.1 \\
    Max Learning Rate & 5e-4\\
    Min Learning Rate & $\frac{\text{Max Learning Rate}}{10}$\\
    LR Decay Steps & $\text{Total Training Steps}$\\
    Batch Size & 1024\\
\midrule
\end{tabular}
\end{table}

\begin{table}[h]
\caption{Hyperparameters for Standard Training (1.7B)}
\label{table:hparamstdlarg1b}
\centering
\begin{tabular}{l l l}
    \toprule
    {Parameter} &   {Values} \\
    \midrule                                           
    Weight Decay  &  0.1 \\
    Max Learning Rate & 5e-5\\
    Min Learning Rate & $\frac{\text{Max Learning Rate}}{10}$\\
    LR Decay Steps & $\text{Total Training Steps}$\\
    Batch Size & 1024\\
\midrule
\end{tabular}
\end{table}
\paragraph{Model Architectures} We pretrain using SmolLM family models of size 360M and 1.7B. For standard training runs, we do not modify the architecture in any way. We describe the modifications necessary for \seqtd below. 

\paragraph{Training Hyperparameters} We report the hyperparameters used for training the models (at size 360M and 1.7B, respectively) in Tables \ref{table:hparamstdlarge} and \ref{table:hparamstdlarg1b}. For fairness, we use the same hyperparameters to train \seqtd.

\subsection{Implementation of \seqtd}
We will release a full implementation of \seqtd at \url{http://github.com/grghosal/MemSinks}. Our implementation builds off of the LitGPT library \cite{litgpt-2023}.

\paragraph{Assignment of Sequence IDs} We assign each document (example) in the pretraining corpus with a sequence ID that is a \textit{hash} of the tokens within that document. We implemented a tensorized hash function in pytorch which will be released with our code.

\paragraph{Tokenization and Packing} When implementing \seqtd, it is necessary to input the sequence identification meta-data into the forward pass of the transformer model. In order to accomplish this while still leveraging the optimized streaming data loaders typically used in pretraining, we \textit{interleaved} the sequence IDs into the stream of tokens saved during the tokenization step. That is, the pretraining data stream contained tokens at every even position and the previous tokens ID code at every odd position. This enables seamless loading of the sequence ID data alongside the tokens during training. During training, the sequence ID data and tokens are separated using a simple indexing operation. As opposed to the smaller-scale TinyStories setting, we pass a sequence ID with \emph{each token} to enable training across document boundaries and prevent the necessity of padding tokens. 

\paragraph{Online Computation of \seqtd Masks} Another crucial challenge faced when scaling \seqtd is the need to deterministically generate a neuron mask for each token in the sequence. While we used the standard Pytorch seeded random number generator during our smaller-scale experiments, this becomes impractical when different tokens in a batch can have separate masks. As Pytorch does not support batched-seeding in its random number generator, we implemented a tensorized linear congruential random number generator to generate neuron masks online during training. This enabled us to avoid needing to precompute masks across a 1 billion token training set. 

\paragraph{Placement of \seqtd} We implemented \seqtd at every hidden MLP layer by partitioning the neurons into generalizing and memorization sink groups. We then apply the mask over the memorization neurons. For efficiency, we used the same masking across all MLP layers. As the SmolLM \cite{smollm} family models uses a GatedMLP implementation, we apply \seqtd in the output-space after the gating. 

\paragraph{\seqtd Hyperparameters}
\begin{table}[h]
\caption{Hyperparameters for \seqtd (both sizes)}
\label{table:hparamstdlargseqtd}
\centering
\begin{tabular}{l l l}
    \toprule
    {Parameter} &   {Values} \\
    \midrule                                           
    $1-g$  &  \{0.05, 0.1, 0.3, 0.5\} \\
    $p$ & \{0.1,0.3,0.5\} \\
\midrule
\end{tabular}
\end{table}
For \seqtd, for both model sizes, we tune across hyperparameter choices given in Table \ref{table:hparamstdlargseqtd}.

\end{document}